\documentclass{elsarticle}

\usepackage{lineno,hyperref}
\modulolinenumbers[5]
\usepackage{comment}
\usepackage{amssymb}
\usepackage{graphicx}
\usepackage{graphicx}
\usepackage{amsfonts}
\usepackage{amsmath}
\usepackage{multirow}
\usepackage{pgf,tikz}
\usepackage{algorithmic}
\usepackage{algorithm}
\usepackage{color}
\usepackage{subcaption}
\usepackage{booktabs}
\usepackage{appendix}
\newcounter{proposition}

\usepackage{soul}

\newcounter{theorem}

\newcounter{corollary}

\newcounter{remark}

\newcounter{proof}
\newenvironment{proof}{\refstepcounter{proof}\par\medskip
        \noindent\textit{Proof}\ }{\par\medskip}
\newenvironment{pf}{\begin{proof}}{\end{proof}}

\newcounter{definition}

\newcommand{\definition}[1][]{%
  \refstepcounter{definition}%
  \par\medskip
  \noindent\textbf{Definition~\arabic{definition}%
  \if\relax\detokenize{#1}\relax\else\ \textbf{(#1)}\fi.}\ %
}

\newcounter{example}

\newcommand{\example}[1][]{%
  \refstepcounter{example}%
  \par\medskip
  \noindent\textbf{Example~\arabic{example}%
  \if\relax\detokenize{#1}\relax\else\ \textbf{(#1)}\fi.}\ %
}

\newenvironment{highlights}{\section*{Highlights}\begin{itemize}}{\end{itemize}}

\journal{Fuzzy Sets and Systems}

\begin{document}

\begin{frontmatter}

\title{
  Matrix Aggregation Operators
  \footnote{This research has been partially funded by the Government of Spain, Grant Plan Nacional de I+D+i PID2024-155289NB-I00 funded by MICIU/AEI/10.13039/501100011033 and ERDF/EU and  PID2025-168223NB-I00, funded by MCIN/AEI/10.13039/501100011033/ERDF UE.}}

\address[A]{{Faculty of Statistical Studies, Complutense University of Madrid}}
            
\address[B]{{Department of Statistics, Computer Science and Mathematics, Universidad Pública de Navarra}}
\address[C]{{Faculty of Mathematics, Complutense University of Madrid}}
\address[D]{Instituto Universitario de Estadística y Ciencia de Datos, Complutense University of Madrid}
\address[E]{Instituto de Matemática Interdisciplinar, Complutense University of Madrid}
\address[F]{{Institute of Smart Cities (ISC), Universidad Pública de Navarra}}

\author[A,D]{Inmaculada Guti\'errez}\ead{inmaguti@ucm.es}

\author[B,F]{{Asier Urio-Larrea}}\ead{asier.urio@unavarra.es}

\author[C,E]{{J.Tinguaro Rodríguez$^{*}$}}\ead{jtrodrig@ucm.es}\cortext[cor1]{Corresponding author: J.Tinguaro Rodríguez, jtrodrig@ucm.es}

\author[A,D]{{Daniel G\'omez}}\ead{dagomez@estad.ucm.es}

\author[C,E]{{Javier Montero}}\ead{monty@mat.ucm.es}

\author[B,F]{{Humberto Bustince}}\ead{bustince@unavarra.es}

\begin{abstract}
Aggregation theory has traditionally focused on operators defined over vectors. However, many applications—including Multi-Criteria Decision Making, Group Decision Making, Fuzzy Rule-Based Classification Systems, and overlap/grouping indices—require aggregating information naturally structured as a matrix of membership degrees (e.g., where a set of objects interacts with a family of fuzzy sets). Despite this, no formal framework has been proposed for this class of operators, partly due to the common practice of flattening matrices into vectors (which discards structural information) and partly due to a reliance on decomposable operators that aggregate rows and columns sequentially. This paper addresses this gap by formalizing the notion of a matrix aggregation operator (MAO). We analyze the decomposability and symmetry properties of MAOs, showing that certain operators cannot be expressed in decomposable form and examining several notions of symmetry. Finally, we introduce a family of MAOs termed maximum entropy global coverage indices (MEGCIs), provide a construction method for them based on combining grouping functions and MEOWA operators, and illustrate their usefulness in cluster quality assessment through an extensive computational study. 
\end{abstract}

\begin{keyword}
Matrix Aggregation Operators \sep Symmetry \sep Decomposability \sep Global Coverage Indexes \sep Clustering

\end{keyword}

\begin{highlights}
\item Definition of a new family of aggregation operators: Matrix Aggregation Operators (MAOs).
\item Study of symmetry and decomposability properties of MAOs.
\item Several examples of MAOs in applications.
\item Introduction of a particular class of MAOs, termed Maximum Entropy Global Coverage Indexes (MEGCIs).
\item Application of MEGCIs for cluster cohesion and quality assessment.
\end{highlights}
\end{frontmatter}


\section{Introduction}\label{sec:intro}

\noindent Aggregation theory constitute a fundamental and highly active research area in fuzzy logic and soft computing, as it deals with the crucial step of synthesizing or summarizing uncertain information. The origins of aggregation can be found in Zadeh's seminal paper \cite{Zadeh1965}, where he proposed that the membership degree of an element $x$ to the union or intersection of two fuzzy sets, $A_1$ and $A_2$, can be seamlessly derived by either computing the maximum or minimum of the respective membership values. However, in his original formulation, Zadeh implicitly distinguished between two fundamentally different aggregation scenarios, a distinction explicitly raised in \cite{MonteroComprFuzz86,MonteroExtFuzz87}.

In the first context, a single object $x$ and $K$ different fuzzy sets $A_1, \dots, A_K$ are considered, and the focus is placed on the aggregation of the membership degrees $\mu_{A_1}(x), \dots, \mu_{A_K}(x)\in[0,1]$ associated with these fuzzy sets. This situation arises, for instance, when modelling composite concepts such as \textit{tall and rich} or \textit{tall or rich}, where the aggregation is performed across different properties of the same object. Conjunctive and disjunctive aggregation operators naturally appear in these settings associated with determining the membership of an object $x$ to a set-theoretic operation consisisting of intersections and unions of fuzzy sets, such as $\mu_{A_1 \cap A_2}(x)$,  $\mu_{A_1 \cup A_2}(x)$, or $\mu_{A_1 \cap (A_2 \cup A_3)}(x)$, based on the individual membership values $(\mu_{A_1}(x), \mu_{A_2}(x),\mu_{A_3}(x))$. 

The second scenario concerns the aggregation of information from several objects with respect to a common property. Let $x_1,\dots,x_N$ be a group of objects and let $A$ be a fuzzy set, such that $\mu_A(x_1),\dots,\mu_A(x_N) \in [0,1]$ quantify the degree to which each of these objects belong to $A$. The question then arises as to which degree the group of objects as a whole can be asserted to verify $A$. For instance, if $x_1=John,\ x_2=Mary,\ x_3=Peter$ and $A=tall$, to which degree is this group of persons \textit{tall}? Although conjunctive and disjunctive operators can again be used in this context to assess whether all or any of these persons are tall, it also makes sense to use averaging operators to quantify the global verification of the property by the group.

In many practical situations, however, both dimensions coexist simultaneously. In these cases, a collection of objects $x_1,\dots,x_N$ interacts with a family of fuzzy sets $A_1,\dots,A_K$, and as shown in Table \ref{tab1} the available information is naturally represented by a matrix $\mathbf{U}=(u_{ij})\in\mathcal{M}_{N,K}([0,1])$ of membership degrees $u_{ij}=\mu_{A_j}(x_i)\in[0,1]$, where $i=1,\ldots,N$, $j=1,\ldots,K$, and $\mathcal{M}_{N,K}([0,1])$ denotes the set of all $N\times K$ matrices whose elements belong to $[0,1]$. Once in this matricial framework, diverse purposes may require aggregating $\mathbf{U}$ into a representative value $F(\mathbf{U})\in[0,1]$.

\begin{table}[h!]
\centering
\begin{tabular}{c|cccc}
 & $A_1$ & $A_2$ & $\cdots$ & $A_K$ \\
\hline
$x_1$ & $\mu_{A_1}(x_1)$ & $\mu_{A_2}(x_1)$ & $\cdots$ & $\mu_{A_K}(x_1)$ \\
$x_2$ & $\mu_{A_1}(x_2)$ & $\mu_{A_2}(x_2)$ & $\cdots$ & $\mu_{A_K}(x_2)$ \\
$\vdots$ & $\vdots$ & $\vdots$ & $\ddots$ & $\vdots$ \\
$x_N$ & $\mu_{A_1}(x_N)$ & $\mu_{A_2}(x_N)$ & $\cdots$ & $\mu_{A_K}(x_N)$ \\
\end{tabular}
\caption{$u_{ij}=\mu_{A_j}(x_i)$ membership degree for $N$ objects and $K$ classes.}\label{tab1}
\end{table}

Some instances of this matricial setting are the following: in Multi-Criteria Decision Making (MCDM) problems \cite{Fodor1994,Amo2004}, a set of alternatives exhibits a degree of satisfaction with respect to multiple criteria. A similar or possibly more complex matrix structure is evident in Group Decision Making (GDM), now involving multiple decision-makers \cite{xia2020group,cabrerizo2014building}. In Fuzzy Rule-Based Classification Systems (FRBCSs) \cite{elkano2018chi}, a matrix of association degrees assesses the evidence provided by each rule for each class. Matrix-based aggregation has been prominently featured in the development of  overlap and grouping indices \cite{garcia2014overlap,dimuro2024grouping}. These aggregate the matrix $\mathbf{U}$ into a single degree, respectively quantifying the degree of overlap of $K$ fuzzy sets defined on a finite universe of discourse with $N$ objects, and the degree to which the union of the fuzzy sets covers the universe.

Thus, despite the multitude of scenarios that naturally involve matrices of membership degrees requiring aggregation, to the extent of our knowledge no attempt has been made to endow this specific class of aggregation operators with a formal structure and well-defined properties. The reasons for this may be two-fold. Firstly, any matrix $\mathbf{U}\in\mathcal{M}_{N,K}([0,1])$ can be flattened into a vector $\mathbf{u} \in [0,1]^{N\cdot K}$, and aggregated in that form. This works when the 2D structure of the matrix no longer matters, such as when applying a symmetric pooling operator to a pooling window of a digital image (see e.g. \cite{OWAPooling}). However, in most applications the matrix structure stores by itself valuable information that should be explicitly preserved during the aggregation process, such as semantically meaningful relationships between rows and columns, and which would be destroyed by flattening. Secondly, possibly a more relevant reason is that in many applications it makes sense to aggregate a matrix using what we shall call \textit{decomposable} operators, that is, operators defined through a two-stage procedure, in which either row or column vectors are first aggregated with a certain vectorial operator, and the resulting vector is then aggregated with another such operator. Nevertheless, although this is an extended procedure, in this work it will be shown that general matrix aggregation operators (e.g., the polarization index introduced in \cite{guevara20}) can not be represented in this decomposable form. 

Motivated by these observations and the aforementioned gap in the formal analysis of matrix aggregations, the present work is devoted to the study of this class of operators from both a theoretical and an applied perspective. We first introduce and formalize the notion of matrix aggregation operator (MAO), which encompasses all those scenarios in which the set of membership degrees to be aggregated into a single degree possesses an inherent matrix structure. We then study the decomposability and symmetry of MAOs, showing that certain MAOs cannot be expressed in decomposable form, and examine several row- and column-based symmetry properties together with the relationships between them. Finally, we introduce a specific family of MAOs, termed maximum entropy global coverage indices (MEGCIs), whose applicability to clustering quality assessment is illustrated by means of an extensive computational study.

The remainder of this paper is organized as follows. Some preliminary concepts are presented in Section~\ref{sec:prel}. The definitions of MAO and decomposability are addressed in Section~\ref{sec:MAOs}. Symmetry properties of MAOs are discussed in Section~\ref{sec:symmetry}. Some examples of MAOs relevant for applications are reviewed in Section \ref{sec:examples}. MEGCI operators are studied in Section~\ref{sec:covering}, and its application to clustering quality assessment is analyzed in Section~\ref{sec:application}. Section~\ref{sec:conclusion} provides conclusions and final remarks.

\section{Preliminaries}\label{sec:prel}

This section recalls some concepts that will be useful throughout this work. The main notion is that of aggregation operator (AO), considering a fixed cardinality $n$ of the data to be aggregated (see \cite{Montero2018,GRABISCH20111,Cutello1999} for other approaches based on recursiveness and computability).

\begin{definition}[Aggregation Operator \cite{Dubois1984, Dubois1985, Calvo2002}]\label{def:AggOp}
Let $n\geq2$. An aggregation operator (of dimension $n$) is a function $AO:[0,1]^n \rightarrow [0,1]$  that satisfies the following conditions:

\begin{enumerate}
    \item \textbf{Boundary conditions}. $AO(0,... , 0) = 0$ and $AO(1,... , 1) = 1$.
    \item \textbf{Monotonocity}. For any pair of $n$-tuples $\mathbf{x}=(x_1, \dots, x_n),\ \mathbf{y}=(y_1, \dots, y_n)$ $\in [0, 1]^n$ such that $x_i \leq y_i$ $\forall i \in \{1, \dots, n\}$, it holds that
$AO(\mathbf{x}) \leq AO(\mathbf{y})$; that is, $AO$ is monotonically non-decreasing with respect to each of its arguments.

 \end{enumerate}
 \end{definition}

Additional properties can be imposed on AOs depending on the application. A relevant instance is symmetry, which ensures that the aggregation result is invariant under permutations $\sigma \in \mathcal{S}_n$ of the inputs, where $\mathcal{S}_n$ denotes the set of all permutations of $n$ elements, that is, the symmetric group of degree $n$. Specifically, an aggregation operator is said to be symmetric if $AO(x_1, \dots, x_n) = AO(x_{\sigma(1)}, \dots, x_{\sigma(n)})$ for every permutation $\sigma \in \mathcal{S}_n$.

A first instance of aggregation operator used in this work are $n$-dimensional grouping functions. Grouping functions were first proposed in \cite{Bustince2012a} in a bivariate setting as the disjunctive dual of the conjunction-oriented overlap functions \cite{Bustince2010}. Thus, their extension to an $n$-dimensional context enable computing disjunctions of several fuzzy degrees.

\begin{definition}[$n$-dimensional grouping function \cite{Gomez2016}] The mapping $G_G : [0,1]^n \rightarrow [0,1]$ is said to be a $n$-dimensional grouping function if and only if the following conditions hold.
\begin{enumerate}
\item[$(G_G1)$] $G_G$ is symmetric;
\item[$(G_G2)$] $G_G(\mathbf{x}) = 0$  if and only if $x_i=0$ $\forall i \in \{1,\dots , n\}$;
\item[$(G_G3)$] $G_G(\mathbf{x}) = 1$  if and only $\exists i \in \{1, \dots , n\}$ with $x_i=1$;
\item[$(G_G4)$] $G_G$ is non-decreasing;
\item[$(G_G5)$]$G_G$ is continuous.
\end{enumerate}
\end{definition}

The second instance of aggregation operator used in this work are ordered weighted averaging (OWA) operators. Proposed in \cite{owaYagger}, OWA operators enable representing a wide spectrum of averages, ranging from the minimum to the maximum of the data, by applying a weighting vector to the ordered sample. A weighting vector is a vector $\textbf{w}=(w_1,\dots,w_n) \in [0,1]^n$ such that $\sum_{i=1}^nw_i=1$.

\begin{definition}[Ordered Weighted Averaging (OWA) operators  \cite{owaYagger}] The OWA operator with weighting vector $\textbf{w}$ is a mapping $OWA_{\textbf{w}}:[0,1]^n \rightarrow [0,1]$ computed for any $\textbf{x}\in [0,1]^n$
as $OWA_{\textbf{w}}(\mathbf{x})=\sum_i^nw_ix_{(i)}$, where $x_{(i)}$ denotes the $i$-th largest element of $\textbf{x}$. 
\end{definition}

Note that OWA operators are symmetric, since the data to be averaged is sorted previously to the application of the weighting vector, and thus the aggregation result does not depend on the order in which the data is presented. Moreover, the closeness of an OWA operator to the limiting disjunctive behavior of the maximum (or inversely, to the conjunctive minimum) can be quantified through the orness degree $\mathcal{O}(\mathbf{w})\in[0,1]$ of its weighting vector $\mathbf{w}$, defined as
\begin{equation}\label{eq:orness}
\mathcal{O}(\mathbf{w}) = \frac{1}{n-1} \sum_{i=1}^{n} (n-i) w_i.
\end{equation}
For instance, Eq. \eqref{eq:orness} assigns orness degree 1 to the weighting vector of the maximum operator, 0 to that of the minimum, and 0.5 to the arithmetic mean.

Finally, a subject of extensive research regarding OWA operators has been the determination of adequate or optimal weighting vectors \cite{ohagan1988}. An approach that have found some success is that of using the maximum entropy weights corresponding to a desired or predefined orness degree $d\in[0,1]$ \cite{ohagan1990, FULLER2001:MEOWA, Harmati2022}. This entails obtaining the weighting vector $\mathbf{w}$ with minimum Shannon entropy     
\begin{equation}\label{eq:meowa}
H(\mathbf{w}) = - \sum_{i=1}^{n} w_i \ln w_i,
\end{equation}
subject to $
\mathcal{O}(\mathbf{w})={d}$. As shown in \cite{FULLER2001:MEOWA,Harmati2022}, this optimization problem has a unique solution $\mathbf{w}^d$ for each ${d}\in[0,1]$. Also, it holds that $w^d_i>0$ for $i=1,\ldots,n$ when ${d}\in(0,1)$. Due to their definition, $\mathbf{w}^d$ maximizes the spread of the weights along the $n$ positions, distributing importance across all objects being aggregated as uniformly as possible for the given orness degree $d$. The OWA operator using such a maximum-entropy weighting vector $\mathbf{w}^d$ is usually referred to as the maximum-entropy OWA (i.e, MEOWA) operator for orness degree ${d}$.  

As a final comment, let us remark that, although all the operators are defined on the interval $[0,1]$, any other interval $I\subset\mathbb{R}$ may also be considered.

\section{Matrix Aggregation Operators and Decomposability}\label{sec:MAOs}
\noindent In this section, the notion of Matrix Aggregation Operator (MAO) is formally introduced and illustrated. Subsequently, the relationship between MAOs and
the concept of decomposability is analyzed.  

\subsection{Definition of Matrix Aggregation Operator}

\noindent As discussed in the introduction, many  aggregation problems naturally deals with information arranged in a matrix structure, where rows and columns represent different semantic dimensions of the problem. In such situations, reducing the available information to a simple vector may ignore relevant relationships and interactions associated with this two-dimensional organization. This observation motivates the introduction of aggregation operators specifically designed to process matrices while preserving their inherent structure.

Therefore, let $\mathcal{M}_{N,K}([0,1])$ denote the set of all matrices with $N$ rows and $K$ columns, whose elements belong to the $[0,1]$, interval. A  matrix aggregation operator is a function $F:\mathcal{M}_{N,K}([0,1])\to [0,1]$ that takes as input an $N\times K$ matrix of [0,1] degrees and produces a single degree in [0,1], and satisfies the usual boundary and monotonicity conditions of an aggregation operator. Formally: 

\begin{definition}[Matrix Aggregation Operator ({MAO})]\label{def:mao} 
Let $N,K$ be positive integers such that $N\cdot K \geq 2$. A \emph{matrix aggregation operator} (of dimension $N$ $\times$ $K$) is a mapping
\begin{equation}\label{eq:MAO}
F : \mathcal{M}_{N,K}([0,1]) \longrightarrow [0,1]
\end{equation}
that satisfies the following conditions:
\begin{enumerate}
    \item  $F(\mathbf{0}_{N,K})= 0$, 
    \item  $F(\mathbf{1}_{N,K})=1$,
    \item \textbf{Monotonicity:} For any $\mathbf{U}, \mathbf{U}' \in \mathcal{M}_{N,K}([0,1])$, 
    if $\mathbf{U} \leq \mathbf{U}'$ entrywise (i.e., $u_{ij} \leq u'_{ij}$ for all $i,j$), then $F(\mathbf{U}) \leq F(\mathbf{U}')$,
\end{enumerate}
    where $\mathbf{0}_{N,K}$ (resp. $\mathbf{1}_{N,K}$) denotes the $N \times K$ matrix with all entries equal to $0$ (resp. $1$).
\end{definition}

\begin{remark}\label{rem_maoiffao}
  Clearly, there is a correspondence between MAOs of dimension $N\times K$ and AOs of dimension $N\cdot K$. To see it, note that any one-to-one mapping $f:\nobreak\{1,\ldots,N\}\times\{1,\ldots,K\}\longrightarrow\{1,\ldots,N\cdot K\}$ can be understood as a flattening or vectorization (see e.g. \cite{Hackbusch2012}) operator $f:\mathcal{M}_{N,K}([0,1]) \longrightarrow [0,1]^{N\cdot K}$ such that $f(\mathbf{U})=(u_{f^{-1}(1)},\dots,u_{f^{-1}(N\cdot K)})$ for any $\mathbf{U}=(u_{ij})\in\mathcal{M}_{N,K}([0,1])$. The $f$-flattened version $F_f$ of a MAO $F$ can then be defined as $F_f(\mathbf{v})=F(f^{-1}(\mathbf{v}))$ for any $\mathbf{v}\in[0,1]^{N\cdot K}$. For any flattening operator $f$, it is then straightforward to check that $F$ is a MAO if and only if $F_f$ is an AO. Moreover, when $f$ is fixed, the mapping that assigns to each $N\times K$ MAO $F$ the corresponding flattened AO $F_f$ of dimension $N\cdot K$ defines a one-to-one correspondence. However, given a MAO $F$, its flattened version $F_f$ depends on $f$, meaning that in general $F_f$ is a different AO for each $f$.
\end{remark}

\begin{example}\label{ex:trace} Let $\mathbf{U} \in \mathcal{M}_{N,N}([0,1])$ be a square matrix, and let the mapping $\mathrm{trace}:\mathcal{M}_{N,N}([0,1]) \longrightarrow [0,1]$ be given by
$$
{\mathrm{trace}}(\mathbf{U}) = \frac{1}{N} \sum_{i=1}^{N} u_{ii}.
$$
It is straightforward to see that ${\mathrm{trace}}$ is a MAO:
\begin{enumerate}
\item $
    {\mathrm{trace}}(\mathbf{0}_{N,N}) = \frac{1}{N} \sum_{i=1}^N 0 = 0$.
    \item ${\mathrm{trace}}(\mathbf{1}_{N,N}) = \frac{1}{N} \sum_{i=1}^N 1 = 1$.
    \item If $\mathbf{U} \leq \mathbf{U}'$ entrywise, then $u_{ii} \leq u'_{ii}$ for each $i$, and therefore ${\mathrm{trace}}(\mathbf{U}) = \frac{1}{N} \sum_{i=1}^N u_{ii} \leq \frac{1}{N} \sum_{i=1}^N u'_{ii} = {\mathrm{trace}}(\mathbf{U}').
    $

\end{enumerate}
Thus, ${\mathrm{trace}}$ satisfies all the conditions set in Definition \autoref{def:mao}. Indeed, given a subset of matrix positions $S \subset \{1,\ldots,N\} \times \{1,\ldots,K\}$ and a flattening operator $f:S\longrightarrow\{1,\ldots,|S|\}$, it is easy to see that the matrix operator defined for any $\mathbf{U} \in \mathcal{M}_{N,K}([0,1])$ as $F_{f,S,A}(\mathbf{U})=A(u_{f^{-1}(1)},\ldots,u_{f^{-1}(|S|)})$ is a MAO if $A$ is an AO of dimension $|S|$. The trace mapping is a particular case of $F_{f,S,A}$ operator for which $K=N$, $S$ is the set of diagonal entries, $f$ is any flattening function, and $A$ is the arithmetic mean. However, not all usual matrix operators are MAOs. As an instance, note that the determinant of matrices in $\mathcal{M}_{N,N}([0,1])$ clearly fails to verify conditions 2 and 3 of Definition \autoref{def:mao}, and can also be negative (i.e., it does not range into [0,1]).
\end{example}

\begin{remark}\label{rem_monoton}
Note that Definition~\ref{def:mao} only imposes non-strict monotonicity with respect to the natural entrywise order, and does not require $F$ to be sensitive to every entry of the matrix. Example~\ref{ex:trace} illustrates this fact: the $\mathrm{trace}$ mapping (resp. any $F_{S,A}$ operator) is monotone according to Definition~\ref{def:mao}; however it disregards all off-diagonal entries (resp. all entries not in $S$), only needing to be non-decreasing along the diagonal (resp. along $S$). This suggests that it could make sense to study more general classes of matrix operators for which monotonicity is restricted to a subset $S$ of the matrix entries. Moreover, the class of MAOs can be also relaxed through the notion of directional monotonicity, introduced for vector-based aggregation functions in \cite{bustince2015directional}. This would relate MAOs with quasi- and pseudo-aggregation functions. These relaxations are not further addressed here and are left as a subject for future research.
\end{remark}

\subsection{Decomposability of Matrix Aggregation Operators}\label{sec:decomposable}

\noindent Although Definition \ref{def:mao} presents aggregation operators acting on matrices, constructing MAOs directly may be difficult in practice. A natural and intuitive approach for their construction consists of exploiting the inherent two-dimensional structure of the data by performing the aggregation in two successive stages. Specifically, one may first aggregate the entries of each row (or each column) into a single value and subsequently aggregate the resulting vector. This decomposition preserves the semantic distinction between the two matrix axes while allowing the use of well-established vectorial aggregation operators in each stage. Such a decomposability constitutes an straightforward and widely applicable way of building MAOs. Nevertheless, as will be shown later, decomposable MAOs represent only a particular subclass of all MAOs.

\begin{definition}[Row-decomposable MAO]\label{def:row_decomposable}
 A MAO  $F : \mathcal{M}_{N,K}([0,1]) \longrightarrow [0,1]$ is said to be \textit{row-decomposable} when there exist AOs $C:[0,1]^N\longrightarrow [0,1]$ and $R_1,\dots,R_N:[0,1]^K\longrightarrow [0,1]$ such that 
 $$
F(\textbf{U}) = C(R_1(\textbf{r}_1),\dots,R_N(\textbf{r}_N))
 $$
 for any $\textbf{U}\in\mathcal{M}_{N,K}([0,1])$, where $\textbf{r}_i$ denotes the $i$-th row of $\textbf{U}$.
\end{definition}

\begin{definition}[Column-decomposable MAO]
 A MAO  $F : \mathcal{M}_{N,K}([0,1]) \longrightarrow [0,1]$ is said to be \textit{column-decomposable} when there exist AOs $R:[0,1]^K\longrightarrow [0,1]$ and $C_1,\dots,C_K:[0,1]^N\longrightarrow [0,1]$ such that 
 $$
F(\textbf{U}) = R(C_1(\textbf{c}_1),\dots,C_K(\textbf{c}_K))
 $$
 for any $\textbf{U}\in\mathcal{M}_{N,K}([0,1])$, where $\textbf{c}_j$ denotes the $j$-th column of $\textbf{U}$.
\end{definition}

\begin{definition}[Decomposable MAO]
A MAO is said to be decomposable if it is either row-decomposable or column-decomposable.
\end{definition}

Note that the trace operator introduced in Example \ref{ex:trace} is a decomposable MAO, as it is indeed both row-decomposable and column-decomposable. This follows by taking $R_i$ (resp. $C_i$) as the projection into the $i$-th axis, and $C$ (resp. $R$) as the arithmetic mean.
 
It is important to stress that, in the above terms, the notion of decomposable MAO is different from that of MAO itself, since there exist some MAOs that are not decomposable. 
 
\begin{example} [Non-decomposable MAO]\label{ex_nondecom}
 The function $F_1:\mathcal{M}_{2,2}([0,1]) \longrightarrow [0,1]$ given by
\begin{equation}\label{eq_ndmao}
  F_1(\textbf{U}) = \min(u_{11}\cdot u_{22} + u_{21} \cdot u_{12},1)
\end{equation}
\noindent for any $\textbf{U}\in\mathcal{M}_{2,2}([0,1]) $ is clearly a MAO, but it is neither row-decomposable nor column-decomposable, as shown in \ref{appendix}. It has to be stressed that the operator $F_1$ can be regarded as a particular instance of the general polarization index $JDJ$ proposed in \cite{guevara20} (see also Example \ref{ex:polarization}).

Similarly, the notions of row-decomposability and column-decomposability are also different. Assuming any $\textbf{U}\in\mathcal{M}_{2,2}([0,1])$, an instance of  row-decomposable MAO that is not column-decomposable is
\begin{equation*}
F_2(\textbf{U}) = \min(u_{11}\cdot u_{12} + u_{21} \cdot u_{22},1) = C(R_1(u_{11},u_{12}),R_2(u_{21},u_{22})) 
\end{equation*}
\noindent with $R_1(x,y)=R_2(x,y)=xy$ and $C(x,y)=\min(x+y,1)$. While a column-decomposable MAO that is not row-decomposable is given by
\begin{equation*}
F_3(\textbf{U}) = \min(u_{11}\cdot u_{21} + u_{12} \cdot u_{22},1) = R(C_1(u_{11},u_{21}),C_2(u_{12},u_{22}))
\end{equation*}
with $C_1(x,y)=C_2(x,y)=xy$ and $R(x,y)=\min(x+y,1)$. 
\end{example}

The following result shows that decomposable MAOs can be constructed by simply specifying the AOs that appear in the corresponding decomposition. Attending to Remark \ref{rem_maoiffao}, its proof is straightforward taking into account that the composition of AOs is also an AO.

\begin{proposition}\label{prop1}
If $C:[0,1]^N\to [0,1]$ and $R_1,\dots,R_N:[0,1]^K\to [0,1]$ are aggregation operators, then the function $F : \mathcal{M}_{N,K}([0,1]) \to [0,1]$ given by
$$
F(\textbf{U}) = C(R_1(\textbf{r}_1),\dots,R_N(\textbf{r}_N))
$$
for any $\textbf{U}\in\mathcal{M}_{N,K}([0,1])$ is a row-decomposable MAO.

Similarly, if $R:[0,1]^K\to [0,1]$ and $C_1,\dots,C_K:[0,1]^N\to [0,1]$ are aggregation operators, then the function $F : \mathcal{M}_{N,K}([0,1]) \to [0,1]$ given by
$$
F(\textbf{U}) = R(C_1(\textbf{r}_1),\dots,C_K(\textbf{r}_K))
$$
for any $\textbf{U}\in\mathcal{M}_{N,K}([0,1])$ is a column-decomposable MAO.
\end{proposition}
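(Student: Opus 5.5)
The plan is to verify the three conditions of Definition~\ref{def:mao} directly for the composed map. Row-decomposability is then immediate from Definition~\ref{def:row_decomposable}, since $F$ is by construction of the required form with AOs $C,R_1,\dots,R_N$. The column case is symmetric: read $\textbf{r}_j$ in the second formula as the $j$-th column $\textbf{c}_j$, and swap the roles of rows and columns. Alternatively, one could invoke Remark~\ref{rem_maoiffao}, flatten $\mathbf{U}$ row by row, and use the fact that a composition of AOs is an AO. I prefer the direct verification because it is just as short and avoids fixing a flattening $f$.

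\textbf{Boundary conditions.} If $\mathbf{U}=\mathbf{0}_{N,K}$, then every row is $\textbf{r}_i=(0,\dots,0)\in[0,1]^K$. By the boundary condition of Definition~\ref{def:AggOp} applied to each $R_i$, we get $R_i(\textbf{r}_i)=0$. Hence $F(\mathbf{0}_{N,K})=C(0,\dots,0)=0$, using the boundary condition for $C$. The same argument with all entries equal to $1$ gives $F(\mathbf{1}_{N,K})=C(1,\dots,1)=1$. Each $R_i$ takes values in $[0,1]$, so the argument of $C$ lies in $[0,1]^N$. Therefore $F$ is well defined as a map into $[0,1]$.

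\textbf{Monotonicity.} Let $\mathbf{U}\le\mathbf{U}'$ entrywise. Then for each $i$ the rows satisfy $\textbf{r}_i\le\textbf{r}'_i$ componentwise, so monotonicity of $R_i$ gives $R_i(\textbf{r}_i)\le R_i(\textbf{r}'_i)$. The vector $(R_1(\textbf{r}_1),\dots,R_N(\textbf{r}_N))$ is therefore componentwise below $(R_1(\textbf{r}'_1),\dots,R_N(\textbf{r}'_N))$. Monotonicity of $C$ then yields $F(\mathbf{U})\le F(\mathbf{U}')$.

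There is no real obstacle. The only point needing care is the degenerate dimensions allowed by Definition~\ref{def:mao}: $N\cdot K\ge2$ permits $N=1$ or $K=1$, whereas Definition~\ref{def:AggOp} requires $n\ge2$. In those cases I would interpret a one-dimensional AO as a monotone map $[0,1]\to[0,1]$ fixing $0$ and $1$. The argument above uses only boundary values and monotonicity, so it goes through unchanged.
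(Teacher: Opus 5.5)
Your proof is correct and is essentially the paper's argument. The paper simply invokes Remark~\ref{rem_maoiffao} together with the fact that a composition of AOs is again an AO, and your direct check of the boundary conditions and monotonicity is exactly the verification behind that fact, done without fixing a flattening; reading $\textbf{r}_j$ as the column $\textbf{c}_j$ in the second formula is the right fix for the typo in the statement.
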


\section{Symmetry of Matrix Aggregation Operators}\label{sec:symmetry}
\noindent In this section, we define several symmetry properties of MAOs and study the relationships between them. In this context, by symmetry it is mean the invariance of a MAO under permutations of some groups of the input matrix elements, particularly permutations of rows, columns, or combinations thereof
\footnote{Although other classes of permutations of matrix elements do exist, such as rotations or transpositions, the discussion is restricted to row and column permutations mainly due to the fact that, as discussed in the Introduction section, in the context of MAOs matrix rows and columns are tipically associated to practically relevant elements such as objects, classes, rules, decision-makers, criteria, etc. (see Section \ref{sec:examples} for some more instances). Thus, it is natural to start the study of symmetry of MAOs by focusing on operators which are invariant under permutations of these elements.}. 

\subsection{Global Symmetry}
\noindent Some notation and conventions are first introduced. Let $\mathcal{S}_L$ denote the set of all permutations of $L$ elements. Given a permutation $\sigma \in \mathcal{S}_L$ and a vector $\textbf{v}\in [0,1]^L$, we denote by $\sigma(\textbf{v})$ the length-$L$ vector obtained by permuting the indexes of $\textbf{v}$ according to $\sigma$, i.e. $\sigma(\textbf{v})_l=v_{\sigma(l)}$ for all $l=1,\dots,L$. We assume  $\sigma(\textbf{v})$ has the same shape as $\textbf{v}$: if $\textbf{v}$ is arranged as a row then so is $\sigma(\textbf{v})$, and conversely if $\textbf{v}$ is arranged as a column. Similarly, given an $N\times K$ matrix $\textbf{U}=(u_{ij})\in \mathcal{M}_{N,K}([0,1])$ and a permutation $\sigma\in\mathcal{S}_{N\cdot K}$, we denote by $\sigma(\textbf{U}) \in \mathcal{M}_{N,K}([0,1])$ the matrix obtained by permuting the elements $u_{ij}$ of $\textbf{U}$ according to $\sigma$, sorting them lexicographically using subindexes $i$ and $j$.
    
In these terms, the standard symmetry of a (vectorial) AO $A:[0,1]^{n}\to [0,1]$ can be stated as 
\begin{equation}\label{eq:vecorialAO}
  A(\textbf{v})=A(\sigma(\textbf{v}))
\end{equation}
\noindent for any vector $\textbf{v}\in [0,1]^{n}$ and any permutation $\sigma\in\mathcal{S}_{n}$. A first notion of symmetry for MAOs is obtained as the direct translation of this property.

\begin{definition}[Global symmetry (GS)]\label{def:globalSymmetry}
  Let $F:\mathcal{M}_{N,K}([0,1])\to [0,1]$ be a MAO. $F$ is said to be globally symmetric  if for any $\textbf{U}\in\mathcal{M}_{N,K}([0,1])$ and any permutation $\sigma\in\mathcal{S}_{N\cdot K}$ it holds that
  $$
    F(\textbf{U})=F(\sigma(\textbf{U})). 
  $$
\end{definition}

\begin{example}[Symmetric matrix aggregation operators]
  The following are examples of MAOs 
 verifying Definition \ref{def:globalSymmetry}:
  \begin{itemize}
    \item $F_4(\textbf{U}) = \max_{i,j}u_{ij}$.
    \item $F_5(\textbf{U}) = \frac{1}{N\cdot K}\sum_{i=1}^N\sum_{j=1}^{K}u_{ij}$.
    \item $F_6(\textbf{U}) = \prod_{i,j}u_{ij}$.
  \end{itemize}
\end{example}

\begin{remark}\label{rem_symflat}
  Note that when $F$ is a globally symmetric MAO its flattened versions $F_f$ (introduced in Remark \ref{rem_maoiffao}) do not depend on $f$, meaning that any flattening operator $f$ always produce the same AO. Moreover, in such a case this unique flattened AO has to be symmetric. It is easy to see that the converse is also true: if $F_f$ is symmetric for a given $f$, then $F$ has to verify GS. Therefore, GS is equivalent to the invariance of the flattened AOs $F_f$, and in turn also to the symmetry of this unique $F_f$.
\end{remark}

In view of Remark \ref{rem_symflat}, it should be stressed that when GS holds, the matrix structure of the input data $\textbf{U}\in\mathcal{M}_{N,K}([0,1])$ becomes irrelevant. This is because in such a case, there is no difference between applying a globally symmetric MAO $F$ to $\textbf{U}$ and applying the symmetric vectorial aggregation $F_f$ to an arbitrarily flattened version of $\textbf{U}$. Furthermore, the global symmetry of $F$ as a MAO has the same meaning as the symmetry of $F_f$—any subset of elements can be permuted without affecting the result of the aggregation.

However, other kinds of symmetry or invariance can be devised that are only meaningful in a matrix setting. These are related to permutations of specific subsets of elements that can only be defined within a matrix context, such as rows and columns, and are thus not applicable or meaningless when applied to flattened vectors.

\subsection{Row and Column Symmetry}\label{sec:RSprops}
\noindent Let us recall that any matrix $\mathbf{U}$ can be expressed in terms of either its rows $\mathbf{r}_i \in [0,1]^K$ or columns $\mathbf{c}_j \in [0,1]^N$, $i=1,\dots,N$, $j=1,\ldots,K$, as a sum of their outer products with the appropriate canonical vectors. That is, if $\mathbf{e}_{L,l}$ denotes the length-$L$ canonical column vector having all its elements equal to 0 except the $l$-th one, which equals 1, then it holds that

\begin{equation}\label{eq:matrowcol}
  \mathbf{U}=\sum_{i=1}^N \mathbf{e}_{N,i}\cdot \mathbf{r}_i = \sum_{j=1}^K \mathbf{c}_j\cdot \mathbf{e}_{K,j}^t,
\end{equation}

\noindent 
where in this case the symbol $\cdot$ acts as the outer product between a $N\times 1$ vector and a $1\times K$ vector, thus producing a $N\times K$ matrix in each addend.

To analyze symmetry properties associated with the invariance of a MAO under permutations of the rows (resp.  columns) of the input matrix $\mathbf{U}$, two kinds of permutations need to be considered:

\begin{itemize}
  \item \textbf{External}: Rows (resp. columns) are permuted between them, exchanging their position in $\textbf{U}$. In the terms of Eq.\ref{eq:matrowcol}, this is accomplished by applying a permutation $\sigma\in\mathcal{S}_N$ (resp. $\theta\in\mathcal{S}_K$) to the indexes of the canonical vectors, thus substituting $\textbf{e}_{N,i}$ by $\textbf{e}_{N,\sigma(i)}$ (resp. $\textbf{e}^t_{K,j}$ by $\textbf{e}^t_{K,\theta(j)}$) in Eq.\ref{eq:matrowcol}.

  \item \textbf{Internal}: Elements $u_{ij}$ of each row $\textbf{r}_i$ (resp. column $\textbf{c}_j$) are permuted, with $j=1,\dots,K$ (resp. $i=1,\ldots,N$). To this aim, a permutation $\theta_i\in\mathcal{S}_K$ (resp. $\sigma_j\in\mathcal{S}_N$) is applied to each row (column), thus substituting $\textbf{r}_i$ by $\theta_i(\textbf{r}_i)$ (resp. $\textbf{c}_j$ by $\sigma_j(\textbf{c}_j)$) in Eq.\ref{eq:matrowcol}.
\end{itemize}

\noindent Note that while a different internal permutation $\theta_i$ (resp. $\sigma_j$) can be applied to each row (resp. column), it only makes sense to consider a single external permutation $\sigma$ (resp. $\theta$) acting on all the rows (resp. columns). 

From these notions, four basic row and column symmetry properties can be introduced. 

\begin{definition}\label{def:symprop}
Let $F:\mathcal{M}_{N,K}([0,1])\to [0,1]$ be a MAO and let $\textbf{U}\in\mathcal{M}_{N,K}([0,1])$ be any input matrix. Then:
\begin{itemize}
  \item It is said that $F$ verifies \textit{internal row symmetry} (\textbf{IRS}) if for any permutations $\theta_i\in\mathcal{S}_K$, $i=1,\dots,N$, it holds that
  \begin{equation}
    F(\textbf{U})=F(\sum_{i=1}^N\textbf{e}_{N,i}\cdot\theta_i(\textbf{r}_i)).
  \end{equation}
  \item It is said that $F$ verifies \textit{external row symmetry} (\textbf{ERS}) if for any permutation $\sigma\in\mathcal{S}_N$ it holds that
  \begin{equation}
    F(\textbf{U})=F(\sum_{i=1}^N\textbf{e}_{N,\sigma(i)}\cdot\textbf{r}_i).
  \end{equation}
  \item It is said that $F$ verifies \textit{internal column symmetry} (\textbf{ICS}) if for any permutations $\sigma_j\in\mathcal{S}_N$, $j=1,\dots,K$, it holds that
  \begin{equation}
    F(\textbf{U})=F(\sum_{j=1}^K \sigma_j(\textbf{c}_j) \cdot \textbf{e}^t_{K,j}).
  \end{equation}
  \item It is said that $F$ verifies \textit{external column symmetry} (\textbf{ECS}) if for any permutation $\theta\in\mathcal{S}_K$ it holds that
  \begin{equation}
    F(\textbf{U})=F(\sum_{j=1}^K \textbf{c}_j \cdot \textbf{e}^t_{K,\theta(j)}).
  \end{equation}
  \end{itemize}
\end{definition}
Note that applying the same internal permutation $\theta_i=\theta$ for $i=1,\ldots,N$ (resp. $\sigma_j=\sigma$ for $j=1,\ldots,K$) to all rows (resp. columns) is equivalent to its application as an external permutation of columns (resp. rows), as expresed by the identity
\begin{equation}
\sum_{i=1}^{N}\textbf{e}_{N,\sigma(i)}\cdot\theta(\textbf{r}_i) =
\sum_{j=1}^{K}\sigma(\textbf{c}_j)\cdot\textbf{e}^t_{K,\theta(j)}.
\end{equation}
Therefore, ICS is a special case of ERS, and similarly IRS is a particular instance of ECS, as stated in the following result.
\begin{proposition}\label{prop_IErel}
    Let $F$ be a MAO. If $F$ verifies ICS, then it also fulfills ERS. And if $F$ verifies IRS, then it also fulfills ECS.
\end{proposition}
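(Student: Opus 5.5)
The plan is to show that an external row permutation is a particular choice of internal column permutations; the second implication then follows by the same argument with rows and columns exchanged. Fix a MAO $F$ satisfying ICS, a matrix $\mathbf{U}\in\mathcal{M}_{N,K}([0,1])$ and a permutation $\sigma\in\mathcal{S}_N$. The goal is
$$F(\mathbf{U})=F\Big(\sum_{i=1}^N\mathbf{e}_{N,\sigma(i)}\cdot\mathbf{r}_i\Big).$$
In ICS we will take the \emph{same} internal permutation in every column, $\sigma_j=\tau$ for $j=1,\dots,K$, with $\tau\in\mathcal{S}_N$ still to be chosen in terms of $\sigma$.

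The key step is an entrywise check of the identity displayed before the statement, with $\theta=\mathrm{id}$:
$$\sum_{j=1}^K\tau(\mathbf{c}_j)\cdot\mathbf{e}^t_{K,j}=\sum_{i=1}^N\mathbf{e}_{N,\tau'(i)}\cdot\mathbf{r}_i$$
for a suitable $\tau'$. On the left, the $(l,j)$ entry is $\tau(\mathbf{c}_j)_l=u_{\tau(l),j}$, by the convention $\sigma(\mathbf{v})_l=v_{\sigma(l)}$. On the right, row $\tau'(i)$ equals $\mathbf{r}_i$, so the $(l,j)$ entry is $u_{\tau'^{-1}(l),j}$. The two sides therefore agree exactly when $\tau=\tau'^{-1}$. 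Given $\sigma$, we choose $\tau=\sigma^{-1}$ and set $\sigma_j=\sigma^{-1}$ for all $j$. ICS then gives $F(\mathbf{U})=F\big(\sum_j\sigma^{-1}(\mathbf{c}_j)\cdot\mathbf{e}^t_{K,j}\big)=F\big(\sum_i\mathbf{e}_{N,\sigma(i)}\cdot\mathbf{r}_i\big)$, which is ERS.

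For IRS $\Rightarrow$ ECS we argue symmetrically. Given $\theta\in\mathcal{S}_K$, we take $\theta_i=\theta^{-1}$ for all $i$. The $(i,l)$ entry of $\sum_i\mathbf{e}_{N,i}\cdot\theta^{-1}(\mathbf{r}_i)$ is $u_{i,\theta^{-1}(l)}$. This matches the $(i,l)$ entry of $\sum_j\mathbf{c}_j\cdot\mathbf{e}^t_{K,\theta(j)}$, since column $\theta(j)$ there is $\mathbf{c}_j$. IRS then yields ECS.

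The only delicate point is this bookkeeping of inverse permutations between the vector convention $\sigma(\mathbf{v})_l=v_{\sigma(l)}$ and the placement convention $\mathbf{e}_{N,\sigma(i)}$. Because the symmetry properties quantify over \emph{all} permutations, and $\sigma\mapsto\sigma^{-1}$ is a bijection of $\mathcal{S}_N$, the proof still works even if the paper's displayed identity is read with $\sigma$ instead of $\sigma^{-1}$. No monotonicity or boundary properties of the MAO are needed.
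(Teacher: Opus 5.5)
Your proof is correct and is essentially the paper's argument: the paper derives the proposition from the identity stating that applying one common internal permutation to every column (resp. row) coincides with an external permutation of the rows (resp. columns). Your entrywise check is slightly more careful than the paper, whose displayed identity $\sum_{i}\mathbf{e}_{N,\sigma(i)}\cdot\theta(\mathbf{r}_i)=\sum_j\sigma(\mathbf{c}_j)\cdot\mathbf{e}^t_{K,\theta(j)}$ holds only after replacing the permutations on one side by their inverses; as you note, this is harmless because the symmetry properties quantify over all permutations.
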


Besides these relationships, the four basic row and column symmetry properties are not mutually exclusive, meaning that a MAO may fulfill more than one simultaneously. This motivates the following definitions.

\begin{definition}\label{def:symprop}
Let $F:\mathcal{M}_{N,K}([0,1])\to [0,1]$ be a MAO and let $\textbf{U}\in\mathcal{M}_{N,K}([0,1])$ be any input matrix. Then:
\begin{itemize}
  \item It is said that $F$ verifies \textit{total row symmetry} (\textbf{TRS}) if it simultaneously satisfies IRS and ERS, that is, if for any permutations $\sigma\in\mathcal{S}_N$ and $\theta_i\in\mathcal{S}_K$, $i=1,\dots,N$, it holds that
  \begin{equation}
    F(\textbf{U})=F(\sum_{i=1}^N\textbf{e}_{N,\sigma(i)}\cdot\theta_i(\textbf{r}_i)).
  \end{equation}
  \item It is said that $F$ verifies \textit{total column symmetry} (\textbf{TCS}) if it simultaneously satisfies ICS and ECS, that is, if for any permutations $\theta\in\mathcal{S}_K$ and $\sigma_j\in\mathcal{S}_N$, $j=1,\dots,K$, it holds that
  \begin{equation}
    F(\textbf{U})=F(\sum_{j=1}^K \sigma_j(\textbf{c}_j) \cdot \textbf{e}^t_{K,\theta(j)}).
  \end{equation}
  \item It is said that $F$ verifies \textit{external row and column symmetry} (\textbf{ERCS}) if it simultaneously satisfies ERS and ECS, that is, if for any permutations $\theta\in\mathcal{S}_K$ and $\sigma\in\mathcal{S}_N$ it holds that
  \begin{equation}
    F(\textbf{U})=F(\sum_{i=1}^N\textbf{e}_{N,\sigma(i)}\cdot\theta(\textbf{r}_i))=F(\sum_{j=1}^{K}\sigma(\textbf{c}_j)\cdot\textbf{e}^t_{K,\theta(j)}).
  \end{equation}
  \end{itemize}
\end{definition}

This last definition addresses the combinations of basic properties given by IRS $\land$ ERS, ICS $\land$ ECS, and ERS $\land$ ECS. To see what happens with the combination IRS $\land$ ICS, let us recall that GS is associated with general, arbitrary permutations $\sigma\in\mathcal{S}_{N\cdot K}$ of the elements of a matrix $\textbf{U}\in\mathcal{M}_{N,K}([0,1])$. It is important to note that any such permutation $\sigma$ can be expressed as the composition of internal row and column permutations $\theta_i,\ \sigma_j$ when these are allowed to be different for each row $\textbf{r}_i$ and column $\textbf{c}_j$, $i=1,\dots,N,\ j=1,\dots,K$\footnote{Technically, the algebraic group generated by independent permutations of entries in each row and each column equals the full symmetric group on the whole $N\cdot K$ matrix entries. The main idea behind this result is that any pair of arbitrary matrix elements can be transposed using specific internal row and column permutations of the entries in the $2\times2$ block associated with such pair of elements. Consequently, since the set of all transpositions generates the full symmetric group (see e.g. \cite{DummitFoote}), it follows that $\mathcal{S}_{N\cdot K}$ is generated by the set of all internal row and column permutations $\theta_i,\ \sigma_j$, $i=1,\dots,N,\ j=1,\dots,K$.}. The following result is then straightforward.

\begin{proposition}\label{prop:equivGS}
Let $F:\mathcal{M}_{N,K}([0,1])\to [0,1]$ be a MAO. Then, $F$ verifies GS if and only if it verifies both IRS and ICS.
\end{proposition}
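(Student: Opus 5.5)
The plan is to prove the two implications separately. The forward direction is immediate and the converse rests on a group-generation argument.

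\emph{GS implies IRS and ICS.} Any choice of internal row permutations $\theta_1,\dots,\theta_N\in\mathcal{S}_K$ rearranges the $N\cdot K$ entries of $\mathbf{U}$. So the matrix $\sum_{i}\mathbf{e}_{N,i}\cdot\theta_i(\mathbf{r}_i)$ equals $\sigma(\mathbf{U})$ for a suitable $\sigma\in\mathcal{S}_{N\cdot K}$, namely the one sending position $(i,j)$ to $(i,\theta_i(j))$ under the lexicographic indexing. Definition~\ref{def:globalSymmetry} then gives IRS. The same argument applied to columns gives ICS.

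\emph{IRS and ICS imply GS.} First I would note that the set $G=\{\sigma\in\mathcal{S}_{N\cdot K}: F(\mathbf{U})=F(\sigma(\mathbf{U}))\ \forall\mathbf{U}\}$ is closed under composition. If $\sigma,\tau\in G$, then $F(\sigma\tau(\mathbf{U}))=F(\sigma(\tau(\mathbf{U})))=F(\tau(\mathbf{U}))=F(\mathbf{U})$, after checking that the action convention composes consistently; a reversal of order is harmless, since the conclusion holds for products either way. Because $G$ is finite and closed under composition, it is a subgroup. IRS and ICS place every internal row permutation and every internal column permutation in $G$. It therefore suffices to show that these permutations generate $\mathcal{S}_{N\cdot K}$, and for that it is enough to produce every transposition of two entries $(i,j)$ and $(k,l)$.

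\emph{Producing an arbitrary transposition.} If $i=k$, the transposition is a single internal row permutation. If $j=l$, it is a single internal column permutation. Otherwise, the 2$\times$2 block with corners $(i,j),(i,l),(k,j),(k,l)$ is used. Write $a_{ab}$ for the transposition of positions $a$ and $b$, and conjugate: $\tau_{(i,j),(k,l)}=\tau_{(i,j),(i,l)}\,\tau_{(i,l),(k,l)}\,\tau_{(i,j),(i,l)}$. The outer factors lie in row $i$ and the middle factor lies in column $l$. Hence every transposition lies in $G$. Since transpositions generate $\mathcal{S}_{N\cdot K}$, we get $G=\mathcal{S}_{N\cdot K}$, which is GS.

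The only delicate point is bookkeeping rather than mathematics. One must check that the internal permutations of Definition~\ref{def:symprop}, written via Eq.~\eqref{eq:matrowcol}, correspond exactly to permutations $\sigma\in\mathcal{S}_{N\cdot K}$ fixing rows (resp. columns) setwise. One must also check that invariance is preserved under composition with the paper's convention $\sigma(\mathbf{v})_l=v_{\sigma(l)}$. Both follow by direct verification.
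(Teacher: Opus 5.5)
Your proposal is correct and takes essentially the same route as the paper. The paper's justification, given in the footnote just before the proposition, is that any transposition of two entries can be realized by internal row and column permutations inside the associated $2\times2$ block, so these permutations generate $\mathcal{S}_{N\cdot K}$. You also spell out what the paper leaves implicit: the forward direction, the explicit conjugation identity, and the closure under composition of the set of permutations leaving $F$ invariant.
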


This result can be understood as imposing a limit on the simultaneous verification of row and column symmetry conditions, after which the matrix structure no longer matters in the aggregation process. As exposed above, this is because a MAO for which both IRS and ICS simultaneously hold is globally symmetric. Such a MAO can actually be regarded as a symmetric vectorial aggregation operator on $[0,1]^{N\cdot K}$ rather than as a MAO, since the entries of the matrix to be aggregated can then be rearranged arbitrarily. Any connection to the underlying meaning of rows and columns is thus lost in this case.

Proposition \ref{prop_IErel} entails that the remaining pairwise combinations IRS $\land$ ECS and ICS $\land$ ERS reduce to just IRS and ICS, respectively. This also entails that combinations of more than two basic symmetry properties also reduce to the considered cases. For instance, ICS $\land$ ECS $\land$ ERS reduces to ICS $\land$ ECS = TCS. Similarly, ICS $\land$ IRS $\land$ ERS reduces to ICS $\land$ IRS = GS. 

Finally, a similar argument to that leading to Proposition \ref{prop_IErel} allows establishing both TRS and TCS as special cases of ERCS.

\begin{proposition}\label{prop_TRCS}
    Let $F$ be a MAO. If $F$ verifies either TRS or TCS, then it also verifies ERCS.
\end{proposition}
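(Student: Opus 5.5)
The plan is to reduce Proposition~\ref{prop_TRCS} to Proposition~\ref{prop_IErel}, treating the two cases separately. By definition, ERCS is exactly ERS $\land$ ECS, so it suffices to check that each of TRS and TCS already contains both external symmetries.

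\emph{Case TRS.} Suppose $F$ verifies TRS, that is, IRS $\land$ ERS. Then ERS holds directly. By Proposition~\ref{prop_IErel}, IRS implies ECS. Hence $F$ verifies ERS $\land$ ECS, which is ERCS.

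\emph{Case TCS.} Suppose $F$ verifies TCS, that is, ICS $\land$ ECS. Then ECS holds directly, and ICS gives ERS by Proposition~\ref{prop_IErel}. So ERCS holds again.

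The only point needing care is that ERCS, as written in Definition~\ref{def:symprop}, asks for invariance under the \emph{joint} action $\sum_{i}\mathbf{e}_{N,\sigma(i)}\cdot\theta(\mathbf{r}_i)$, not just under each external permutation on its own. I will handle this by composition. Apply the column permutation $\theta$ first; by ECS this leaves $F$ unchanged. Then apply the row permutation $\sigma$ to the resulting matrix; by ERS this also leaves $F$ unchanged. These two operations commute and together produce the matrix in the definition, by the identity displayed before Proposition~\ref{prop_IErel}. So separate invariance under ERS and ECS yields joint invariance, and the two expressions in the ERCS definition coincide.

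I do not expect a real obstacle. The one subtle step is checking that the index conventions line up, namely that $\theta(\mathbf{r}_i)$ applied to every row is the same as relocating the columns via $\mathbf{e}^t_{K,\theta(j)}$, possibly up to replacing $\theta$ by $\theta^{-1}$. Since ECS is required for \emph{all} $\theta\in\mathcal{S}_K$, any such inversion is harmless.
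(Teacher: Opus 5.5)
Your proof is correct and matches the paper's argument: TRS gives ERS directly and gives ECS via Proposition~\ref{prop_IErel} (IRS $\Rightarrow$ ECS), and the TCS case is symmetric. Your extra composition step for the joint action, including the $\theta\mapsto\theta^{-1}$ adjustment in the displayed identity, is sound; it simply makes explicit what the paper takes as part of defining ERCS as ERS $\land$ ECS.
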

\begin{proof}
    If $F$ verifies TRS, then by definition both IRS and ERS have to hold. But by Prop. \ref{prop_IErel}, IRS entails ECS, and thus $F$ verifies both ERS and ECS, which is the defining condition of ERCS. A similar argument applies when $F$ verifies TCS.
\end{proof}
    
In conclusion, the combination of the four basic row and column symmetry conditions allows generating four more additional symmetry properties, for a total of eight properties, including GS. According to Definition \ref{def:symprop} and Propositions \ref{prop_IErel}-\ref{prop_TRCS}, relationships between these 8 properties can be summarized as shown in \autoref{fig:relprops}.

\begin{figure}[ht]
  \centering
  \includegraphics[width=0.5\textwidth]{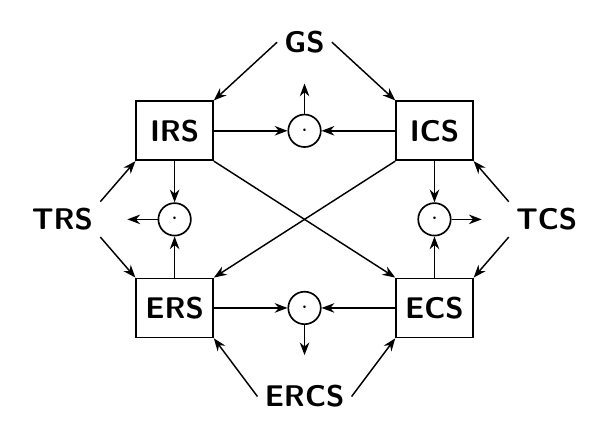}
  \caption{Relationships between the four basic row and column symmetry properties (within boxes) and the derived symmetry properties, including global symmetry. Arrows should be read as implications, while the symbol $\odot$ denotes the conjunction of properties.}
  \label{fig:relprops}
\end{figure}

Finally, the next result states the relationships that hold between row and column symmetry properties when the input matrix is transposed.

\begin{proposition}
  Let $F:\mathcal{M}_{N,K}([0,1])\to [0,1]$ be a MAO of dimension $N\times K$, and let $G:\mathcal{M}_{K,N}([0,1])\to [0,1]$ be the function defined as $G(\textbf{U})=F(\textbf{U}^t)$ for any $\textbf{U} \in \mathcal{M}_{K,N}([0,1])$. Then, $G$ is a MAO of dimension $K\times N$, and the following holds:
  \begin{itemize}
    \item $G$ verifies ERS (resp. ECS) if and only if $F$ verifies ECS (resp. ERS).
    \item $G$ verifies IRS (resp. ICS) if and only if $F$ verifies ICS (resp. IRS).
  \end{itemize}
\end{proposition}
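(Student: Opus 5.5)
The plan is to work directly from the definitions. The only tool needed is the observation that transposition exchanges rows and columns: if $\mathbf{U}\in\mathcal{M}_{K,N}([0,1])$ has rows $\mathbf{r}_1,\dots,\mathbf{r}_K$ and columns $\mathbf{c}_1,\dots,\mathbf{c}_N$, then the columns of $\mathbf{U}^t$ are $\mathbf{r}_1^t,\dots,\mathbf{r}_K^t$ and its rows are $\mathbf{c}_1^t,\dots,\mathbf{c}_N^t$.

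\textbf{Step 1: $G$ is a MAO.} Transposition maps $\mathbf{0}_{K,N}$ to $\mathbf{0}_{N,K}$ and $\mathbf{1}_{K,N}$ to $\mathbf{1}_{N,K}$. It also preserves the entrywise order, since $\mathbf{U}\le\mathbf{U}'$ implies $\mathbf{U}^t\le\mathbf{U}'^t$. The three conditions of Definition~\ref{def:mao} for $G$ therefore follow immediately from those for $F$.

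\textbf{Step 2: the transposition identity.} Transposing the decompositions in Eq.~\eqref{eq:matrowcol} gives, for any $\sigma\in\mathcal{S}_K$,
\[
\Big(\sum_{i=1}^K \mathbf{e}_{K,\sigma(i)}\cdot \mathbf{r}_i\Big)^t=\sum_{i=1}^K \mathbf{r}_i^t\cdot \mathbf{e}_{K,\sigma(i)}^t .
\]
The right-hand side is exactly $\mathbf{U}^t$ with its columns externally permuted by $\sigma$. In the same way, for $\theta_i\in\mathcal{S}_N$,
\[
\Big(\sum_{i=1}^K \mathbf{e}_{K,i}\cdot \theta_i(\mathbf{r}_i)\Big)^t=\sum_{i=1}^K \theta_i(\mathbf{r}_i^t)\cdot \mathbf{e}_{K,i}^t ,
\]
which is $\mathbf{U}^t$ with an internal permutation $\theta_i$ applied to its $i$-th column. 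The analogous identities hold for external and internal column permutations of $\mathbf{U}$, which become row permutations of $\mathbf{U}^t$.

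\textbf{Step 3: the equivalences.} Take ERS for $G$ as the model case. It says that $F(\mathbf{U}^t)=F\big((\sum_i \mathbf{e}_{K,\sigma(i)}\cdot\mathbf{r}_i)^t\big)$ for every $\mathbf{U}$ and every $\sigma\in\mathcal{S}_K$. By Step~2 this is $F(\mathbf{V})=F(\sum_j \mathbf{c}_j\cdot\mathbf{e}^t_{K,\sigma(j)})$ with $\mathbf{V}=\mathbf{U}^t$. Since transposition is a bijection $\mathcal{M}_{K,N}\to\mathcal{M}_{N,K}$, $\mathbf{V}$ ranges over all of $\mathcal{M}_{N,K}([0,1])$, and the statement is precisely ECS for $F$. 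Read in reverse, the same chain gives the converse.

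The remaining three equivalences are obtained identically, using the corresponding identity from Step~2:
\begin{itemize}
\item ECS for $G$ is equivalent to ERS for $F$;
\item IRS for $G$ is equivalent to ICS for $F$;
\item ICS for $G$ is equivalent to IRS for $F$.
\end{itemize}
Alternatively, the "resp." cases follow by applying the proved cases to the pair $(G,F)$, because $F(\mathbf{W})=G(\mathbf{W}^t)$.

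The only point needing care is the bookkeeping in Step~2. One must check that the convention $\sigma(\mathbf{v})_l=v_{\sigma(l)}$ commutes with transposition, i.e.\ $\theta(\mathbf{r})^t=\theta(\mathbf{r}^t)$, which holds because shape is preserved by assumption. One must also check that the same permutation groups appear on both sides: the external row permutations of $\mathcal{M}_{K,N}$ lie in $\mathcal{S}_K$, matching the external column permutations of $\mathcal{M}_{N,K}$. Once these are verified, nothing else is substantive.
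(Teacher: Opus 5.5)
Your proof is correct and uses the same argument as the paper. The paper's proof just observes that transposition exchanges the row and column decompositions in Eq.~\eqref{eq:matrowcol}, so external and internal row permutations of $\mathbf{U}$ become external and internal column permutations of $\mathbf{U}^t$, and conversely. You also check that $G$ is a MAO and make the transposes and the matching permutation groups explicit; the paper leaves this bookkeeping implicit, and its displayed identity even omits the transposes on $\mathbf{c}_j$ and $\mathbf{r}_i$.
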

\begin{pf}
Given $\textbf{U} \in \mathcal{M}_{K,N}([0,1])$, the result follows directly from the fact that, if $\textbf{r}_i$ and $\textbf{c}_j$ respectively denote the rows and columns of $\textbf{U}^t$, then
$$
\textbf{U}=\sum_{j=1}^{K}\textbf{e}_{K,j}\cdot\textbf{c}_j = \sum_{i=1}^{N}\textbf{r}_i\cdot\textbf{e}^t_{N,i}.
$$
\end{pf}


\begin{example}[\bf Symmetry properties]\label{ex:SymConds} The trace operator described in Example \ref{ex:trace} constitutes an instance of a MAO that does not verify any of the eight exposed symmetry properties. Another example is given by 
$$
F_7(\textbf{U})=\sum_{i=1}^N w_i(\sum_{j=1}^K v_j u_{ij})=\mathbf{w}^t\mathbf{U}\mathbf{v}
$$
\noindent for general weighting vectors $\textbf{v}\in[0,1]^K$ and $\textbf{w}\in[0,1]^N$.

Next we provide instances of MAOs that do not verify GS but do fulfill some of the other basic or derived symmetry properties. In this way, it is easy to see that for any $p>0$ TRS holds for
$$
F_9(\textbf{U})=\prod_{i=1}^N(\max\textbf{r}_i)^p,
$$
\noindent and also that it verifies ERCS but not TCS. Similarly, 
$$
F_{10}(\textbf{U})=\prod_{j=1}^K(\max\textbf{c}_j)^p
$$
\noindent verifies TCS and ERCS, but not TRS. A MAO that fullfils ERS  but not ERCS is
$$
F_{11}(\textbf{U})=\prod_{i=1}^{N}\sum_{j=1}^Kv_ju_{ij},
$$
\noindent while
$$ F_{12}(\textbf{U})=\sum_{i=1}^Nw_i\min\textbf{r}_i
$$
\noindent verifies IRS but not ERS. Finally, ECS holds for
$$
F_{13}(\textbf{U})=\frac{1}{K}\sum_{j=1}^K\sum_{i=1}^K w_i u_{ij}, 
$$
\noindent but it does not verify IRS.

All the instances provided so far in this example are decomposable MAOs. Although one may conjecture that it is hard that any of the row or column symmetry conditions studied in this section hold for general non-decomposable MAOs, note that the non-decomposable operator $F_1$ introduced in Eq.\eqref{eq_ndmao} indeed verifies ERCS. However, $F_1$ can be easily turned into an operator that does not fulfill any symmetry property by introducing weights in its definition, and thus the inverse conjecture also does not hold in general.
\end{example}

\begin{remark}
  Note that while the previous example shows that the trace operator does not verify any of the considered symmetry properties, it is nevertheless invariant under any permutation of the matrix diagonal entries. We refer to this invariance under permutations of the diagonal entries as \emph{diagonal symmetry} (DS). 
  Therefore, there are alternative symmetry properties for MAOs, such as DS, which are not covered by the four basic row and column symmetry properties. Indeed, following the idea of Remark \ref{rem_monoton}, it is possible to devise symmetry properties associated with specific subsets of matrix entries, such that a MAO just has to be invariant under permutations of these entries. No one of such subset-based properties (except of course GS) is a particular case of the exposed row and column symmetry properties, while some of the former ones generalize some of the latter (e.g., internal symmetry of just a single row generalizes IRS). However, in our oppinion, the practical interest of these alternative symmetry properties does not match that of the row and column ones, and they are not further addressed here.
\end{remark}

\section{Matrix Aggregation Operators in applications}\label{sec:examples}

\noindent In this section, examples are provided of well-known MAOs naturally arising in real problems and applications where information is represented by a matrix of membership degrees, scores, or evaluations. The purpose of this section is not to provide an exhaustive review of these areas, but rather to illustrate that many well-established aggregation procedures can be naturally interpreted as MAOs.

\begin{example}[\bf Multicriteria Decision Making Problems]

\noindent Let $X = \{a_1, \dots, a_N \}$ denote a set of alternatives, and let $C_1, \dots, C_K$ denote a set of criteria. Let us denote by $u_{ij}$ the degree to which the alternative $a_i$ reaches the objective (or satisfies the criterion) $C_j$.  
There are many multicriteria decision making (MCDM) methods in which the information is represented as a matrix $\mathbf{U}\in\mathcal{M}_{N,K}([0,1])$, where $u_{ij} = 1$ means that the alternative $a_i$ completely satisfies the criterion $C_j$. In this framework, decision procedures often involve the aggregation of all matrix entries into a single representative value per alternative, leading to the selection of the most appropriate alternative according to some aggregation rule. Formally, the aggregated utility $u^*$ of the chosen alternative can be associated to the result of applying a matrix aggregation operator $F$ on the matrix \textbf{U}, such that $u^*=F(\textbf{U})$. Some aggregation functions that can be found in literature and that corresponds with different decision strategies are:

\begin{enumerate}
\item \textbf{Max--Min strategy} $F(\mathbf{U})=\max_i\left\{\min_j u_{ij}\right\}$. 
This operator satisfies TRS but not GS.

\item \textbf{Min--Max strategy}  Assuming that $u_{ij}$ represents a risk or cost, $
F(\mathbf{U})=\min_i\left\{\max_j u_{ij}\right\}$. This operator TRS but not GS.

\item \textbf{Weighted Sum strategy} $F(\mathbf{U})=\max_i\left\{\sum_{j=1}^{K}w_j u_{ij}\right\}. $ When the weights remain associated with fixed criteria, this operator only satisfies ERS.

\item \textbf{MaxiMax strategy} $ F(\mathbf{U})=\max_i\left\{\max_j u_{ij}\right\}=\max u_{ij}$. This operator is globally symmetric. 

\item \textbf{Ordered Weighted Averaging (OWA) strategy} $ F(\mathbf{U})=$ \\ $\max_i\left\{\sum_{j=1}^{K}w_j u_{i(j)}\right\}$.  This operator always satisfies TRS, and is globally symmetric when the row-level OWA operator (within brackets) reduces to the maximum operator for each row.
\end{enumerate}
\end{example}

\begin{example}[\bf Decision making with multiple DMs.]

\noindent Let $X = \{a_1, \dots, a_n\}$ denote a set of alternatives, and let $\{D_1, \dots, D_k\}$ denote a set of decision makers (DMs). If we define $u_{ij}$ as the degree to which the alternative $a_i$ satisfies the main objective (or preference) of decision maker $D_j$, we have that, from a mathematical point of view, this is formally equivalent to the classical Multi-Criteria Decision-Making (MCDM) framework. In the MCDM case, one defines a set of criteria $C_1, \dots, C_k$ instead of decision makers, and the same notation applies. Hence, the mathematical formulation of the multi-DM problem coincides with that of MCDM, differing only in the interpretation of the index $j$. In both cases, decision procedures typically involve aggregating the evaluations of all decision makers (or criteria) into a single representative value for each alternative.  
\end{example}

\begin{example}[\bf Fuzzy Rule-Based Classification Systems]

\noindent In the context of fuzzy rule-based classification systems (FRBCS), the knowledge base can be assumed to be composed of a set of $Q$ fuzzy rules of the form
$$
R_q: \text{ If } X_1 \text{ is } A^q_1 \text{ and } X_2 \text{ is } A^q_2 \text{ and} \dots \text{and } X_m \text{ is } A^q_m, \text{ then Class} = (r_1^q,\dots,r_K^q),
$$
where $K$ is the number of classes, and for each $q=1,\dots,Q$, $A^q_i$ denotes the fuzzy set associated with the feature $X_i$ in the $q$-th rule, and $r_j^q\in[0,1]$ represents the weight or evidence level associated with the $j$-th class by that rule.

Given a query or input pattern $\mathbf{x} = (x_1, \dots, x_m)$ to be classified, the standard fuzzy reasoning method \cite{Cordon1999} first multiplies the degree of compatibility $\mu_q(\mathbf{x})$ 
by the rule weights of all rules, obtaining the association degrees $ u_{qj} = \mu_q(\mathbf{x}) \cdot r_j^q,\ q=1,\dots,Q,\ j=1,\dots,K,$. These quantify the evidence for each class according to each rule. The association degrees of each class $C_j$ are then aggregated into a certainty degree $ \pi_j(\textbf{x})=A(u_{1j},\dots,u_{Qj}) \in [0,1]$, and the query \textbf{x} is finally asigned to the class with maximum certainty degree. Typical choices for the aggregation operator $A$ are the maximum (leading to a classical, winner-rule FRBCS) or the mean. Note then that the certainty degree associated to the predicted class is given by the MAO $F(\textbf{U})=\max(A(\mathbf{c}_1),\ldots,A(\mathbf{c}_K))$, where $\mathbf{c}_j$ denotes the $j$-th column of the $Q \times K$ matrix of association degrees $\textbf{U}=(u_{qj})$. For the mentioned choices of $A$, $F$ is globally symmetric when $A=\max$, while for $A=$ mean it verifies TCS.
\end{example}

\begin{example}[\bf Grouping and Overlap Indices]

\noindent Based on the notion of consistency index \cite{Zadeh1971} (a measure of similarity between fuzzy sets), some authors \cite{garcia2014overlap, dimuro2024grouping} introduced, for two sets, the concepts of Overlap Index ($OI$), which quantifies the degree of overlap between fuzzy sets in a given universe, and its dual, the Grouping Index ($GI$). In \cite{asmus2025generalized} the concepts of Grouping and Overlap Index were considered in the $K$-dimensional case, where $K$ fuzzy sets $A_1, \dots, A_K$ are defined over the same universe $X=\{x_1, \dots, x_N\}$ through membership functions $\mu_{A_j}:X \to [0,1]$. Then, from a mathematical point of view, both $OIs$ and $GIs$ can be viewed as MAOs that transform the matrix $\mathbf{U}=(u_{ij})$ with elements $u_{ij} = \mu_{A_j}(x_i)$ into a value in $[0,1]$.
\end{example}

\begin{example}[\bf Fuzzy Clustering]\label{ex:fuzzyClustering}

\noindent In a fuzzy clustering context, a set of objects $X = \{x_1,  \dots, x_N\}$ has to be partitioned into $K$ fuzzy clusters $\{C_1,\dots,C_K\}$. For each object $x_i$ and each cluster $C_j$, let $u_{ij} \in [0,1]$ express the extent up to which $x_i$ belongs to or is associated with cluster $C_j$. All the fuzzy clustering information is thus sumarized by means of the matrix $\mathbf{U} = (u_{ij})$. To evaluate the quality of a fuzzy partition, many authors develop indices in which all the information contained in $\mathbf{U}$ is aggregated into a single value that reflects the overall quality of the partition, for instance, through proxy notions of compactness and separability of the clusters. Several fuzzy clustering validity indices, such as, e.g., the Xie–Beni (XB) index \cite{xieBeni1991validity}, can thus be reagarded as instances of MAOs. 
\end{example}

\begin{example}[\bf Polarization Measures]\label{ex:polarization}

Given a set of $N$ individuals and two poles $X_A$ and $X_B$, let $\mathbf{U}\in\mathcal{M}_{N,2}([0,1])$ be the matrix such that $u_{i1}=\mu_{X_A}(x_i)$ and $u_{i2}=\mu_{X_B}(x_i)$ represent the membership degree of individual $x_i$ to each of the poles, for any $i=1,\ldots,N$. The JDJ polarization index defined in \cite{guevara20,gutierrez2021community} can be directly expressed as a function of $\mathbf{U}$, given by
$$
JDJ_{pol}(\mathbf{U}) = \frac{2}{N(N-1)}\sum_{i=1}^{N-1}\sum_{j>i} \phi\big(\varphi(u_{i1},u_{j2}),\, \varphi(u_{j1},u_{i2})\big),
$$
where $\varphi$ and $\phi$ are, respectively, an overlap and a grouping function. Unlike the previous examples, $JDJ_{pol}$ genuinely combines information from different rows and columns of $\mathbf{U}$ through the bivariate functions $\varphi$ and $\phi$, and similarly to the the MAO introduced in Example \ref{ex_nondecom} it can not be decomposed into a pair of successive row and column aggregations. Thus, it constitutes a general example of non-decomposable MAO.
\end{example}

Although the application domains of the examples above differ substantially, they all share the same mathematical structure: the available information is naturally represented by a matrix whose rows and columns have different semantic meanings. Matrix Aggregation Operators provide a unified framework for studying these aggregation processes while preserving the inherent two-dimensional structure of the data.

\section{Maximum Entropy Global Coverage Indexes}\label{sec:covering}
\noindent This section builds upon some preliminary notion, exposed in \cite{eusflat25:covering,rodriguez2025crb}, in order to introduce an interesting particular class of MAOs, to be referred to as maximum entropy global coverage indexes (MEGCIs). Specifically, here we depart from the notion of global coverage index (GCI), which was proposed in \cite{eusflat25:covering} in a context where elements $u_{ij}\in[0,1]$ of the matrix $\mathbf{U}$ were assumed to measure the level up to which an object $x_i$ is represented (or covered) by the $j$-th of a set of $K$ possible classes or representatives. Next, this notion is more abstractly and concisely reformulated in the framework of MAOs.

\begin{definition}[Global Coverage Index \cite{eusflat25:covering}]
A MAO $F : \mathcal{M}_{N,K}([0,1]) \to [0,1]$ is called a \textit{Global Coverage Index} if it is totally row symmetric (i.e., if it verifies TRS), and satisfies the following additional properties:
\begin{enumerate}
\item[$(GCI1)$] $F(\mathbf{U}) = 0$ if and only if $\max \mathbf{r}_{i} = 0$ for all $i = 1, \dots, N$.
\item[$(GCI2)$] $F(\mathbf{U}) = 1$ if and only if $\max\mathbf{r}_{i} = 1$ for all $i = 1, \dots, N$.
\end{enumerate}
\end{definition}

Thus, a GCI takes value 1 iff each row of $\mathbf{U}$ contains an element equal to 1, and it takes value 0 iff all elements of $\mathbf{U}$ are 0. Moreover, a GCI has to be invariant under external permutations of objects, as well as under internal permutation of classes for each object. 

In \cite{eusflat25:covering}, a method providing sufficient conditions for constructing row-decom- posable GCIs was provided. Given a function $G:[0,1]^K\to[0,1]$, let $\mathbf{G}:\mathcal{M}_{N,K}([0,1])\to [0,1]^N$ be an $N$-dimensional vectorial map defined as $\mathbf{G}(\mathbf{U})=(G(\mathbf{r}_1),\ldots,G(\mathbf{r}_N))$. Then, the construction method is based on compositions of the form $(M\circ\mathbf{G})$, with $G$ being a $K$-dimensional grouping function and $M$ an aggregation operator with a particular averaging and compensative behaviour.

\begin{proposition}\cite{eusflat25:covering}\label{prop:constru_GCI}
  Let $G:[0,1]^K\to[0,1]$ be a $K$-dimensional grouping function, and let $M:[0,1]^N\to[0,1]$ be a function. The mapping $GCI:\mathcal{M}_{N,K}([0,1])\to [0,1]$ defined as $GCI(\mathbf{U})= (M\circ\mathbf{G})(\mathbf{U}) = M(G(\mathbf{r}_1),\ldots,G(\mathbf{r}_N))$ is a GCI if and only if $M$ is continuous, non-decreasing, symmetric and verifies the following conditions:
  \begin{enumerate}
    \item[$(M1)$] $M(\mathbf{v}) = 0$ if and only if $\mathbf{v} = \mathbf{0}_N$.
    \item[$(M2)$] $M(\mathbf{v}) = 1$ if and only if $\mathbf{v} = \mathbf{1}_N$.
  \end{enumerate}
\end{proposition}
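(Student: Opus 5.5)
The plan is to prove the two directions separately. Throughout I rely on a single device: the diagonal section $g(t)=G(t,\ldots,t)$ of the grouping function $G$. By $(G_G2)$–$(G_G5)$, $g$ is continuous and non-decreasing with $g(0)=0$ and $g(1)=1$, so it maps $[0,1]$ onto $[0,1]$. Consequently every vector $\mathbf{v}\in[0,1]^N$ equals $\mathbf{G}(\mathbf{U})$ for the matrix $\mathbf{U}$ whose $i$-th row is constant equal to some $t_i\in g^{-1}(v_i)$. I also use the equivalences given by $(G_G2)$ and $(G_G3)$: $G(\mathbf{r})=0$ iff $\max\mathbf{r}=0$, and $G(\mathbf{r})=1$ iff $\max\mathbf{r}=1$.

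\emph{Sufficiency.} Assume $M$ is continuous, non-decreasing, symmetric and satisfies $(M1)$ and $(M2)$.
First, $GCI$ is a MAO. We have $G(\mathbf{0}_K)=0$, so $GCI(\mathbf{0}_{N,K})=M(\mathbf{0}_N)=0$ by $(M1)$. Likewise $G(\mathbf{1}_K)=1$ and $(M2)$ give $GCI(\mathbf{1}_{N,K})=1$. Monotonicity holds because both $M$ and $G$ are non-decreasing and composition preserves this (as in Proposition \ref{prop1}).
Second, $GCI$ satisfies TRS. IRS holds because $G$ is symmetric, so $G(\theta_i(\mathbf{r}_i))=G(\mathbf{r}_i)$. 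ERS holds because $M$ is symmetric, so permuting the rows only permutes the arguments of $M$.
Third, for $(GCI1)$: $GCI(\mathbf{U})=0$ iff $\mathbf{G}(\mathbf{U})=\mathbf{0}_N$ by $(M1)$, iff every $G(\mathbf{r}_i)=0$, iff every $\max\mathbf{r}_i=0$ by $(G_G2)$. The argument for $(GCI2)$ is identical, using $(M2)$ and $(G_G3)$.

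\emph{Necessity.} Assume $M\circ\mathbf{G}$ is a GCI.
For symmetry of $M$, take any $\mathbf{v}$ and realise it by the constant-row matrix $\mathbf{U}$ above. Applying ERS to $\mathbf{U}$ gives $M(\mathbf{v})=M(\sigma(\mathbf{v}))$.
For monotonicity, choose the preimages canonically as $t_i=\min g^{-1}(v_i)$. This minimum exists because $g^{-1}(v_i)$ is closed and nonempty. Since $g$ is non-decreasing, this choice is itself non-decreasing in $v_i$. Hence $\mathbf{v}\le\mathbf{v}'$ yields matrices $\mathbf{U}\le\mathbf{U}'$ entrywise, and the MAO monotonicity gives $M(\mathbf{v})\le M(\mathbf{v}')$.
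For $(M1)$: with $\mathbf{U}$ realising $\mathbf{v}$, $M(\mathbf{v})=0$ iff $GCI(\mathbf{U})=0$, iff all rows of $\mathbf{U}$ vanish by $(GCI1)$, iff every $G(\mathbf{r}_i)=v_i=0$ by $(G_G2)$. The proof of $(M2)$ is the same, using $(GCI2)$ and $(G_G3)$.

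The step I expect to be the real obstacle is the necessity of \emph{continuity} of $M$. Nothing in the definition of a GCI (TRS, MAO axioms, $(GCI1)$, $(GCI2)$) imposes continuity. Indeed, a discontinuous symmetric non-decreasing $M$ with $(M1)$–$(M2)$ composed with a grouping $G$ appears to satisfy every requirement. For example, one can take $M(\mathbf{v})=\frac{1}{2}\,\mathrm{mean}(\mathbf{v})$ whenever $\mathbf{v}\ne\mathbf{1}_N$, with $M(\mathbf{1}_N)=1$; this is non-decreasing, since every other value is below $\frac{1}{2}$, and it is discontinuous at $\mathbf{1}_N$.
I would therefore first try to extract continuity from the surjectivity of $g$, and I expect this to fail. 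If it does, I would present the result as characterising the continuous GCIs of the form $M\circ\mathbf{G}$ (equivalently, as ``if and only if'' for all conditions other than continuity, with continuity only sufficient). Alternatively, I would read the statement with continuity included in the intended notion of GCI, in which case continuity of $M$ follows by composing $GCI$ with the continuous section $\mathbf{v}\mapsto\mathbf{U}$ built from a continuous right inverse of $g$ where one exists.
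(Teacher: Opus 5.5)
The paper gives no proof of this proposition (it is imported from \cite{eusflat25:covering}), so there is no in-paper argument to compare with. Your proof of the ``if'' direction is correct, and so is your necessity argument for symmetry, monotonicity, $(M1)$ and $(M2)$. Realising an arbitrary $\mathbf{v}$ by constant-row matrices, with the monotone choice $t_i=\min g^{-1}(v_i)$, is exactly what is needed.

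Your diagnosis of continuity is also right. Under the paper's definition, a GCI is a MAO satisfying TRS, $(GCI1)$ and $(GCI2)$, with no continuity requirement. So the ``only if'' claim fails for continuity: your $M$ (half the mean away from $\mathbf{1}_N$, value $1$ at $\mathbf{1}_N$) composed with any grouping function is a GCI, yet $M$ is discontinuous. The statement as written therefore cannot be proved. The correct characterisation simply drops continuity. It is not used anywhere in your sufficiency argument, so it is superfluous rather than ``only sufficient''. The paper only ever relies on the ``if'' direction (Proposition~\ref{prop:OWA-GCI} and the MEGCI definition), and your argument establishes that direction.

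The one step that fails as written is your fallback for the reading in which GCIs are required to be continuous. A continuous right inverse of $g$ exists only when $g$ is strictly increasing, and for grouping functions it need not be. Take $G=\phi\circ\max$, where $\phi$ is continuous and non-decreasing, $\phi^{-1}(0)=\{0\}$, $\phi^{-1}(1)=\{1\}$, and $\phi$ is constant on some $[a,b]$ with $a<b$. This $G$ is a grouping function with $g=\phi$, and every right inverse of $\phi$ jumps across $[a,b]$. Changing the section does not help either. If $s$ were a continuous map with $G\circ s=\mathrm{id}$, then $\max\circ s$ would be a continuous right inverse of $\phi$, which is impossible.

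Use compactness instead. Suppose $\mathbf{v}^n\to\mathbf{v}$ and pick $\mathbf{t}^n$ with $g(t^n_i)=v^n_i$ for all $i$. Every subsequence has a further subsequence with $\mathbf{t}^{n_k}\to\mathbf{t}$, and continuity of $g$ gives $g(t_i)=v_i$. Continuity of $M\circ\mathbf{G}$ on constant-row matrices then gives $M(\mathbf{v}^{n_k})\to M(\mathbf{v})$, and hence $M(\mathbf{v}^n)\to M(\mathbf{v})$ along the whole sequence. Equivalently, $(t_1,\dots,t_N)\mapsto(g(t_1),\dots,g(t_N))$ is a continuous surjection from a compact space onto a Hausdorff space, hence a quotient map. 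Since $M$ composed with this map is continuous, $M$ itself is continuous.
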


The motivation behind MEGCIs arises from the fact that, although weighted means are an appealing option to play the role of the function $M$ in Prop. \ref{prop:constru_GCI}, this construction method does not work in general in such a case. There are two reasons: First, given a weighting vector $\mathbf{w} = (w_1, \dots, w_N)\in[0,1]^N$, the associated weighted mean is in general a non-symmetric operator. This occurs because the weights $w_i$ are linked to specific positions, that remain fixed even if the objects $x_i$ permute. Consequently, $(M \circ \mathbf{G})$ fails to satisfy external symmetry of rows. Second, weighted means satisfy conditions $(M1)$ and $(M2)$ only when no element of their weighting vector is 0. Otherwise, $(M \circ \mathbf{G})$ does not verify properties $(GCI1)$ and $(GCI2)$.

The first isssue can be addressed by using OWA operators, which, as exposed in Section \ref{sec:prel}, constitute a particular class of symmetric weighted means. Similarly, the second issue can be solved by avoiding null weights. Therefore, the following result is straightforward.

\begin{proposition}\label{prop:OWA-GCI}
  Let $G$ be a $K$-dimensional grouping function, let $\mathbf{w} \in(0,1]^N$ be a weighting vector with strictly positive elements, and let $M_\mathbf{w}$ denote the associated OWA operator. Then, $(M_\mathbf{w}\circ\mathbf{G})$ is a GCI.
\end{proposition}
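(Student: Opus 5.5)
The plan is to invoke Proposition~\ref{prop:constru_GCI} with $M=M_\mathbf{w}$. Since $G$ is assumed to be a $K$-dimensional grouping function, it only remains to check that $M_\mathbf{w}$ is continuous, non-decreasing and symmetric, and that it satisfies $(M1)$ and $(M2)$. The sufficiency direction of that proposition then gives the result.

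\textbf{Step 1: continuity, monotonicity and symmetry.} Symmetry was already noted in Section~\ref{sec:prel}: the data are sorted before the weights are applied. For continuity and monotonicity I would use the fact that the map $\mathbf{v}\mapsto (v_{(1)},\dots,v_{(N)})$ sending a vector to its decreasingly ordered version has two properties.
\begin{itemize}
\item It is continuous, since each $v_{(i)}$ is a max--min expression of the coordinates.
\item It is componentwise non-decreasing: if $\mathbf{v}\le\mathbf{v}'$, then $v_{(i)}\le v'_{(i)}$ for all $i$. This follows because $v_{(i)}$ is the largest $t$ such that at least $i$ coordinates are $\ge t$.
\end{itemize}
Since $w_i\ge 0$, the weighted sum $\sum_i w_i v_{(i)}$ inherits both properties.

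\textbf{Step 2: conditions $(M1)$ and $(M2)$.} The backward implications are immediate from $\sum_i w_i=1$: $M_\mathbf{w}(\mathbf{0}_N)=0$ and $M_\mathbf{w}(\mathbf{1}_N)=1$. For the forward direction of $(M1)$, suppose $\sum_i w_i v_{(i)}=0$. All terms are non-negative and every $w_i>0$, so $v_{(i)}=0$ for all $i$, hence $\mathbf{v}=\mathbf{0}_N$. For $(M2)$, write $1-M_\mathbf{w}(\mathbf{v})=\sum_i w_i(1-v_{(i)})$. If this vanishes, strict positivity of the weights again forces $v_{(i)}=1$ for all $i$.

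The only step that needs any care is the monotonicity of the sorting map in Step 1. Strict positivity of $\mathbf{w}$ is used only in Step 2, and it is exactly what excludes the weighted-mean counterexamples discussed before the statement.
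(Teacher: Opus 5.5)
Your proof is correct and takes the same route as the paper, which calls the result straightforward: apply the sufficiency direction of Proposition~\ref{prop:constru_GCI} with $M=M_\mathbf{w}$, using the symmetry of OWA operators and the strict positivity of $\mathbf{w}$ to obtain $(M1)$ and $(M2)$. You additionally prove the continuity and monotonicity of $M_\mathbf{w}$ via the order-statistics map, which the paper leaves implicit.
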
\par\medskip


These OWA-based GCIs offer the interesting capability of assigning weights to rows $\textbf{r}_i$ depending on the magnitude of the gruoping values $G(\textbf{r}_i)$, to then perform aggregation through the corresponding (ordered) weighted mean of those values. For instance, this allows weighting more heavily those rows with a lower grouping, in such a way that the resulting MAO becomes more demanding regarding the fullfilment of the property expressed by $G$. Moreover, as mentioned in Section \ref{sec:prel}, it is possible to quantify such a requirement level by means of the orness degree $
\mathcal{O}(\mathbf{w})$ of the associated weighting vector.

In this sense, it is important to note that, given a requirement level $d \in (0,1)$, there are infinitely many weighting vectors $\textbf{w}\in (0,1]^N$ such that $\mathcal{O}(\textbf{w})=d$. From a practical point of view, this diversity poses a dificulty since it entails the need for devising a specific weighting vector with the desired orness degree, and most importantly, featuring a sound behavior in terms of how the $N$ values $G(\textbf{r}_i)$ are aggregated. 

Maximum entropy weights provide a convenient tool to address this issue, because of the three properties signaled in Section \ref{sec:prel}: First, for any $d \in [0,1]$, there exists a unique maximum-entropy weighting vector $\textbf{w}^d$ such that $
\mathcal{O}(\textbf{w}^d)=d$. Second, if $d \in (0,1)$, then $\textbf{w}^d\in(0,1]^N$, i.e., it contains no null weights. Third, by definition, maximum entropy weights follow the most uniform distribution possible for the given orness. This entails that the corresponding (ME)OWA operator $M_{\textbf{w}^d}$ assigns the most similar importance possible to all rows $\textbf{r}_i$. 

Consequently, these properties together with Props. \ref{prop:constru_GCI} and \ref{prop:OWA-GCI} guarantee that there exists a unique MEOWA-based GCI for each choice of grouping function $G$ and orness degree $d\in(0,1)$. This fact is reflected in the following definition.

\begin{definition}[Maximum Entropy Global Coverage Index (MEGCI)]
  Let $G$ be $K$-dimensional grouping function, and let $d\in(0,1)$. The MAO $S_{G,d}:\mathcal{M}_{N,K}([0,1])\rightarrow[0,1]$ defined as $$S_{G,d}(\mathbf{U})=M_{\textbf{w}^d}(G(\textbf{r}_1),\dots,G(\textbf{r}_N))$$ is referred to as the \textit{maximum entropy global coverage index} (MEGCI) corresponding to orness degree $d$ and grouping function $G$. 
\end{definition}

The family of all MEGCI operators can be understood as the union of the subfamilies $MEGCI(G) = \{S_{G,d}|d\in(0,1)\}$ associated to different grouping functions $G$. Each such subfamily can thus be provided with a linear order using the orness degree $o$ of its elements. However, this total order does neither have minimum nor maximum element, although it is easy to see that the MAOs $S_{G,0}=\lim_{d\rightarrow 0}S_{G,d}=\min\circ\mathbf{G}$ and $S_{G,1}=\lim_{d\rightarrow 1}S_{G,d}=\max\circ\mathbf{G}$ are respectively the infimum and supremum of $MEGCI(G)$. Therefore, there exists elements of $MEGCI(G)$ that are arbitrarily close to either $S_{G,0}$ or $S_{G,1}$. For this reason, and since $S_{G,0}$ or $S_{G,1}$ are easily computable, in practice it may be convenient to consider $S_{G,0}$ and $S_{G,1}$ as the limiting members of $MEGCI(G)$.

The subfamily $MEGCI(\max)$ takes on a special relevance in the mentioned context in which the rows $\textbf{r}_i$ and columns $\textbf{c}_j$ of the matrix $\mathbf{U}$ are respectively linked to objects and classes, where objects $x_i$ are associated with, or covered by, the classes $c_j$ up to a degree $u_{ij}$. In this case, the grouping step $G(\textbf{r}_i)=\max(u_{i1},\ldots,u_{iK})$ implicitly assigns each object $x_i$ to the class it is most associated with. Consequently, the operators $S_{\max,d}\in MEGCI(\max)$ can be understood as an spectrum of quality indexes of the system formed by the classes, the objects, and their degrees of association with the clasess they are assigned to. Particularly, $S_{\max,0.5}$ can be understood as the most balanced index, in a sense the central element of the family. As $d$ tends to 0, the average of the vector $(\max(\mathbf{r}_1),\ldots,\max(\mathbf{r}_N))$ carried out by $M_{\mathbf{w}^d}$ tends to impose an increasing weight on the objects with a lowest coverage, thus forcing those smallest $\max(\mathbf{r}_i)$ to be greater in order to attain a given quality level. In the limit, the quality valuation of the system would be equal to the coverage of the object with the lowest maximum association, i.e., that provided by the index $S_{\max,0} = \min_i(\max_j(u_{ij}))$. Conversely, when $d$ tends to 1, the quality requirements are relaxed, since then the best covered objects tend to be given a greater weight. In the limit, quality would be identified with the maximum association of the object that gets best covered, i.e., $S_{\max,1} = \max_{i,j}(u_{ij})$.

\section{Application: MEGCI-based Cluster Quality Assessment}\label{sec:application}
\noindent This section is devoted to illustrate the utilization of MAOs, and specifically MEGCIs, for clustering quality analysis purposes. To this aim, similarly to Example \ref{ex:fuzzyClustering}, it is assumed that the relevant information, describing the association between $N$ objects $X=\{x_1,\dots,x_N\}\subset \mathbb{R}^p$ to be grouped and $K$ clusters forming a partition $\mathcal{P}_K=\{C_1,\dots,C_K\}$ of those objects, is arranged as an $N\times K$ matrix $\mathbf{U}_K$. The derivation of quality indexes of $\mathcal{P}_K$ from $\mathbf{U}_K$ can thus be understood as an aggregation task to be performed through MAOs.

Then, Section \ref{subsec:exp_cluster_cohesion} demonstrates that, with appropriate choices for the information represented in $\mathbf{U}_K$, MEGCIs provide a family of useful indexes for assessing partition quality. Specifically, two alternative definitions for the elements $u_{ij}$ of $\mathbf{U}_K$ are considered: i) Fuzzy $K$-Means (FCM) membership degrees, and ii) the coverage degrees introduced in \cite{eusflat25:covering,rodriguez2025crb}. Notably, it is observed that only under the latter choice can certain MEGCIs be interpreted as cluster cohesion (or compactness) measures. It is then shown that these MEGCI-based cohesion measures can be easily transformed into internal cluster validity indexes (ICVIs) to assess various quality aspects of a partition. From this basis, Section \ref {subsec:exp_models} describes a more extensive experimental study\footnote{The experimental code of this study is publicly available at \url{https://github.com/asieriko/MAO-ClusterCohesion}.} assessing the application of MEGCI-based ICVIs to the estimation of the optimal number of clusters. 


\subsection{MEGCI-based Internal Cluster Cohesion Measures and Cluster Validity Indexes}\label{subsec:exp_cluster_cohesion}

\noindent Cluster cohesion measures 
quantify the level of compactness or homogeneity of the objects being grouped into a cluster. The notion of cluster cohesion is naturally extended to partitions formed by several clusters, and indeed cohesion measures are mostly used at this level to directly quantify partition cohesion. Traditionally, cohesion has been associated with tightly-grouped objects and thus with low intra-cluster distances or variance. Therefore, in the context of prototype- or centroid-based clustering algorithms such as $K$-Means (KM) or FCM, which implicitly operate under the assumption of spherical, isotropic clusters, cohesion is usually understood to increase when distances $\|x_i - c_j\|$ between objects $x_i$ and their cluster centroid $c_j\in\mathbb{R}^p$ decrease. Due to the greedy nature of these algorithms, this entails that cohesion is expected to increase when the partition's number of clusters $K$ grows.

\begin{figure}[ht]
\centering
\includegraphics[width=\linewidth]{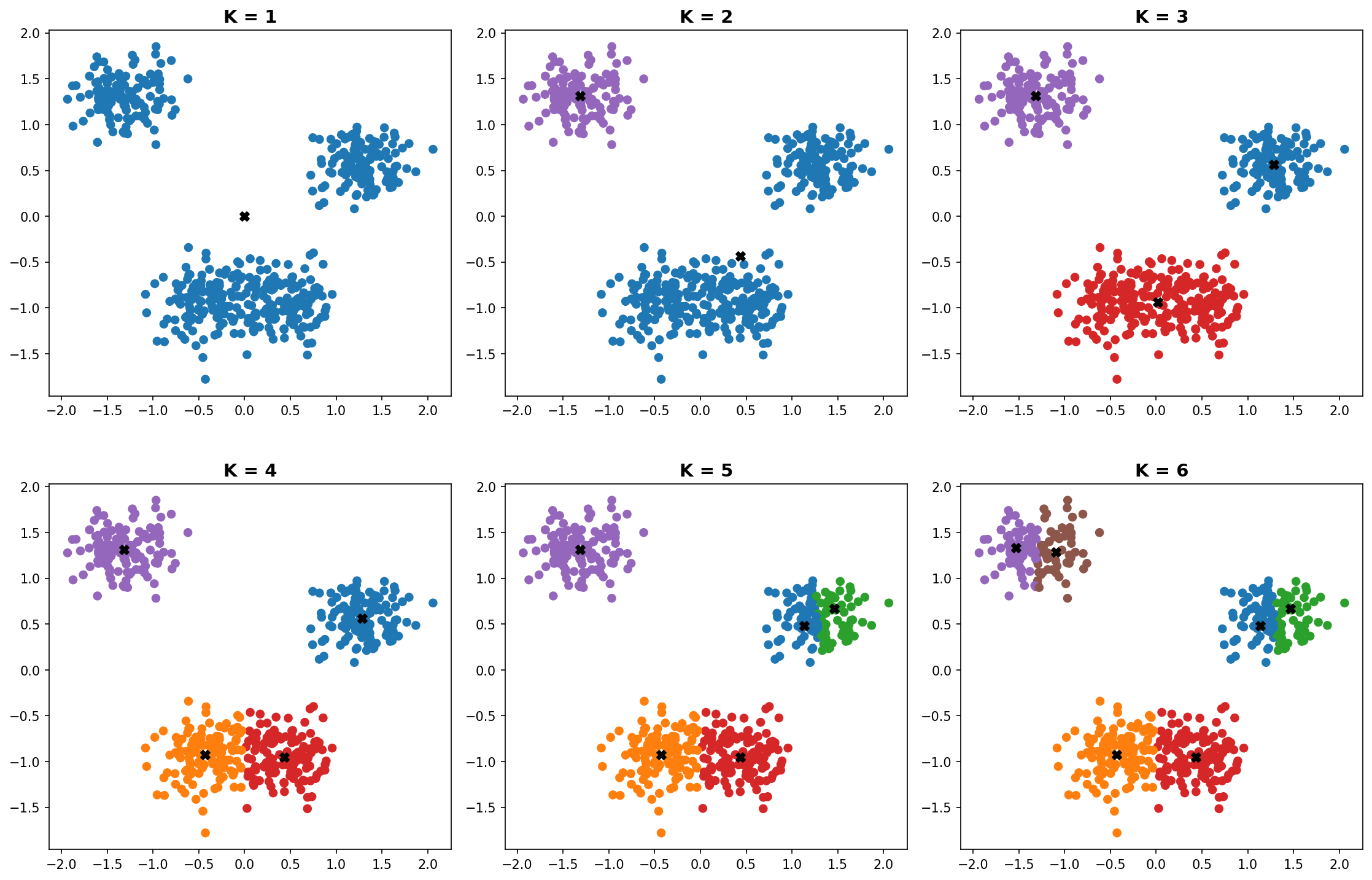}
\caption{Partitions $\mathcal{P}_1,\dots,\mathcal{P}_6$ of an example 2-dimensional dataset containing $N = 1024$ samples balancedly drawn from $K_{true} = 4$ isotropic Gaussian distributions. A scaling factor $\sigma_{scale} = 0.3$ is used, resulting in a moderate overlap of two of the four groups. The partitions were obtained by means of the FCM algorithm, using fuzziness parameter $m=2$ and Euclidean distance, and assigning each sample to the cluster with maximum membership degree \eqref{eq:fcm_degrees}. A black cross identifies each cluster centroid.}
\label{fig:6partitions}
\end{figure}

We illustrate the application of MEGCIs in this context using a synthetic 2-dimensional example dataset with $K_{true}=4$ actual groups. The FCM algorithm is applied on this dataset after its standardization, producing a sequence of partitions $\mathcal{P}_K$, with the number of clusters $K$ ranging from 1 to a certain upper bound $K_{max}\leq N$. Here, $K_{max}=50$ is employed. The first six partitions $\mathcal{P}_1,\dots,\mathcal{P}_6$ are shown in Figure \ref{fig:6partitions}. As mentioned, the elements $u_{ij}$ of the matrices $\mathbf{U}_K$ are computed in two alternative ways:
\begin{enumerate}
  \item FCM membership degrees, defined as 
  \begin{equation}\label{eq:fcm_degrees}
    u_{ij} = \left( \sum_{h=1}^K  \left(\frac{\| x_i - c_j\|}{\| x_i - c_h\|} \right)^{\frac{2}{m-1}}  \right)^{-1}.
  \end{equation}
  It holds that $u_{i1} + \dots + u_{ik} = 1$ for any object $x_i$. The resulting matrix of membership degrees is denoted by $\mathbf{U}_K^{FCM}$.
  
  \item Coverage degrees, which model the association between objects and clusters as exponentially decaying with the distance of the former to the latter's centroids. They are expressed as 
  \begin{equation}\label{eq:coverage_degrees}
    u_{ij} = \exp^{-\frac{r}{s}\| x_i - c_j\|},
  \end{equation}
  \noindent where $r=2\ln10$ is chosen to associate fractional powers of $10^{-2}$ with the corresponding fractions of distance $s$, and in turn $s=5\sqrt{p}$ is employed for standardized data. Contrarily to FCM membership degrees, coverage degrees are intended to measure the association between objects and clusters in an absolute (instead of correlative) way, and thus they  have not to sum up to 1 for each object. The resulting matrix is denoted as $\mathbf{U}_K^{COV}$.
\end{enumerate}
 
Then, the MEGCI operators $S_{\max,0},\ S_{\max,0.5}$ and $S_{\max,1}$ are applied on the sequence of matrices $\mathbf{U}_K^{FCM}$ and $\mathbf{U}_K^{COV}$ such that $K=1,\dots,K_{max}$. As discussed in Section \ref{sec:covering}, these three operators constitute the limiting and central instances of the subfamily of MEGCI operators $S_{G,d}$ based on the grouping function $G=\max$, with orness degrees $d\in\{0,0.5,1\}$. To simplify, the notation $S_d$ shall be used from now on to refer to the operator $S_{\max,d}$. Since the choice of $G=\max$ can be interpreted as an assignment of each object to the cluster with the closest centroid, for each partition $\mathcal{P}_K$ the considered indexes $S_d(\mathbf{U}_K)$ respectively inform of the minimum, mean and maximum across the membership and covering degrees of all objects in the clusters they are assigned to.  

\begin{table}[h]
\centering
\caption{Behavior of the different indices for the sequence of  partitions shown in Figure \ref{fig:monotonicity}.}
\label{tab:monotonicity}
\begin{tabular}{lcccccc}
\hline
Index & $K=1$ & $K=2$ & $K=3$ & $K=4$ & $K=5$ & $K=6$ \\
\hline
$S_0(\mathbf{U}_K^{FCM})$ & 1.0000 & 0.6260 & 0.5330 & 0.4765 & 0.4431 & 0.4250\\
$S_{0.5}(\mathbf{U}_K^{FCM})$ & 1.0000 & 0.8868 & 0.9283 & 0.8918 & 0.8311 & 0.7837\\
$S_1(\mathbf{U}_K^{FCM})$ & 1.0000 & 0.9999 & 0.9999 & 0.9999 & 0.9999 & 0.9990 \\
$S_0(\mathbf{U}_K^{COV})$ & 0.2132 & 0.2724 & 0.4880 & 0.5742 & 0.5742 & 0.5742 \\
$S_{0.5}(\mathbf{U}_K^{COV})$ & 0.4290 & 0.6171 & 0.7795 & 0.8260 & 0.8334 & 0.8432 \\
$S_1(\mathbf{U}_K^{COV})$ & 0.7375 & 0.9833 & 0.9853 & 0.9907 & 0.9889 & 0.9853 \\
$XB_K$ & - & 0.1265 & 0.0462 & 0.1196 & 0.6639 & 0.5419\\
$CV_K^{0.5}$ & - & 1.1583 &	3.4925 & 6.2838 &	0.7551 & 0.8048 \\

\hline
\end{tabular}
\end{table}

\begin{figure}[ht]
  \centering
  \includegraphics[width=1\textwidth]{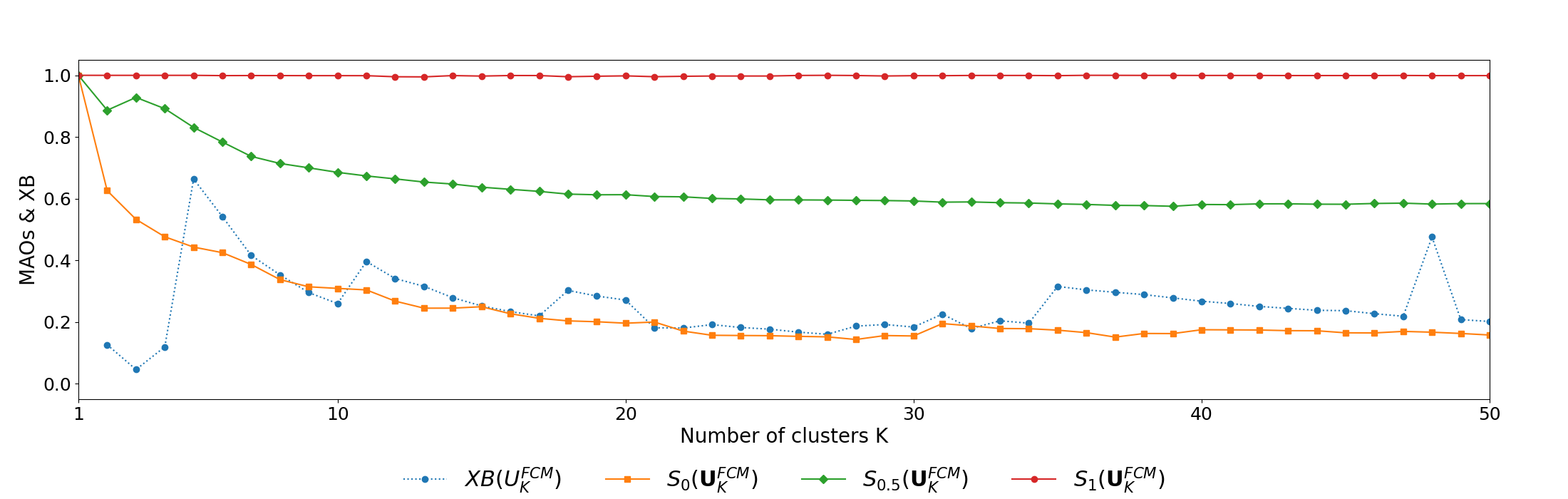}
  \includegraphics[width=1\textwidth]{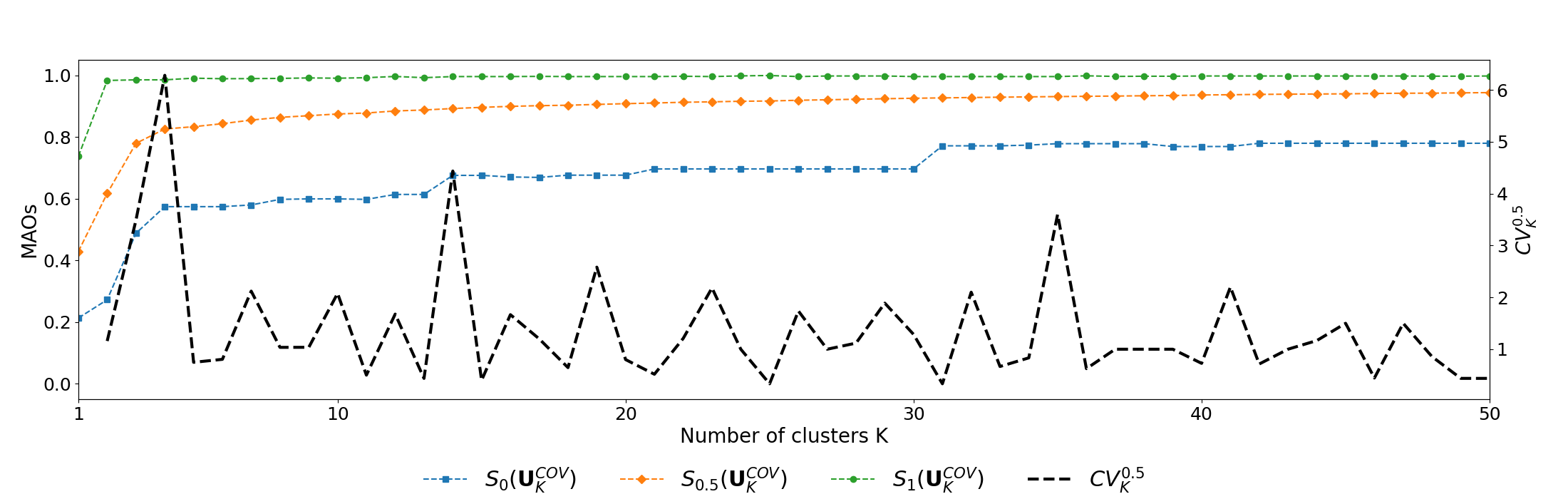}
  \caption{Value of each index for partitions  $\mathcal{P}_K$, $K =1,\dots, 50$. The top panel shows the MEGCIs obtained from the $\mathbf{U}_K^{FCM}$ matrices of membership values returned by FCM, as well as the corresponding XB index. The bottom panel shows the MEGCIs obtained from the $\mathbf{U}_K^{COV}$ matrices of coverage degrees, together with the curvature index corresponding to the cohesion measure $S_{0.5}(\mathbf{U}_K^{COV}$).}
  \label{fig:monotonicity}
\end{figure}

Figure \ref{fig:monotonicity} shows the six resulting index sequences $S_d(\mathbf{U}_K)$, $K=1,\dots,K_{max}$, while Table \ref{tab:monotonicity} informs of their values for the partitions in Figure \ref{fig:6partitions}. Then, notice first that, despite the matrix it is applied on, the operator $S_1$ leads to excessively stable, non-informative indexes, as it only takes into account the object that is closest to any centroid. This distance tends to be close to 0, and thus both $S_1(\mathbf{U}_K^{FCM})$ and $S_1(\mathbf{U}_K^{COV})$ tend to be approximately 1 for any $K$. In general, operators $S_d$ with $d\approx1$ will exhibit this insentive behaviour, rendering them unuseful as cohesion measures. Second, $S_0(\mathbf{U}_K^{FCM})$ and $S_{0.5}(\mathbf{U}_K^{FCM})$ present a counterintuitive behavior, as they tend to decrease or to fluctuate when the partitions number of clusters increase, that is, when distances between objects and centroids decrease, and cohesion is thus expected to rise\footnote{Since MAOs are non-decreasing functions, this can only happen because of the non-monotonic and/or decreasing behavior of FCM membership degrees with respect to the number of clusters $K$. In turn, as can be seen from Eq. \eqref{eq:fcm_degrees}, this behavior is due to the probabilistic definition of these degrees, which assess the association between an object and a cluster relatively to that of the same object with other clusters, instead of as an absolute quantity, independent of other association degrees. Indeed, when $K=1$ all objects are assigned with maximum membership degree 1 to that single cluster, regardless of their situation with respect to the global centroid.}. 

In contrast, when $S_0$ and $S_{0.5}$ are applied on $\mathbf{U}^{COV}$, the resulting indexes do present a behavior much more compatible with that of a cohesion measure, especially $S_{0.5}$: it exhibits a sensitive monotonically increasing trend, smoothly approaching 1 as $K$ increases, and without locally stabilizing in plateaus, as $S_0$ instead does. However, although this last may not provide a sensitive cohesion measure, it is indeed a useful index, as it informs of the coverage degree of the worst covered object, which can be a quite relevant quantity in certain contexts to assess partition quality. In general, operators $S_d$ with a relatively low orness $d$ assess cohesion mainly through a minority of not well-covered objects, while those with a medium $d$ provide a more holistic or global picture of cohesion. Moreover, the range between indexes $S_0(\mathbf{U}^{COV})$ and $S_0(\mathbf{U}^{COV})$ also behaves as expected, reducing the gap between the least and most covered objects as $K$ increases. It can as well be seen that the mean coverage $S_{0.5}(\mathbf{U}_k^{COV})$ seems to stabilize at the correct number of clusters, i.e. when $K=K_{true}=4$.

This last observation suggests that those MEGCIs behaving as cohesion measures, such as $S_{0.5}(\mathbf{U}^{COV})$, could carry relevant information for estimating the right number of clusters $K_{true}$. However, due to their monotonic behavior these MEGCIs cannot be directly employed as standard ICVIs by simply looking at their global optimum, the maximum in this case. Instead, to discriminate the best partition, an additional processing step is required. In this work, this is explored by applying the curvature (CV) method proposed in \cite{zhang2017curvature} over the sequences $S_d(\mathbf{U}_K^{COV})$, $K=1,\dots,K_{max}$, for different orness degrees $d\in(0,1)$. Specifically, the corresponding sequences of the CV internal clustering validation index are computed as

\begin{equation} CV^d_K = \frac{S_d(\mathbf{U}_K^{COV}) - S_d(\mathbf{U}_{k-1}^{COV})}{S_d(\mathbf{U}_{K+1}^{COV}) - S_d(\mathbf{U}_K^{COV})},
\end{equation}

\noindent for $K$ between 2 and $K_{max} - 1$. The selected number of clusters would then be given by the value of $k$ that maximizes the ratio $CV^d_K$, since it would identify the point where the marginal coverage gain of adding a new cluster is lowest (see \cite{zhang2017curvature}). The application of the CV index in the example dataset above is also described in Figure \ref{fig:monotonicity} and Table \ref{tab:monotonicity}, using $d=0.5$. Particularly, $CV^{0.5}_K$ is maximum for $K=4$, correctly identifying the right number of groups $K_{true}$ of the dataset despite two of them presenting a moderate overlap. In contrast, the well-known XB index in Example \ref{ex:fuzzyClustering} presents a minimum at $K=3$, and thus it is in this case unable of recognizing the actual $K_{true}$. This illustrates that MEGCI-based cohesion measures and ICVIs may improve on existing indexes at assessing some clustering quality aspects.

\subsection{Optimal cluster number estimation}\label{subsec:exp_models}

\noindent This section presents a more systematic analysis of the above illustrated number of clusters estimation (NCE) capability of MEGCI-based ICVIs. To this aim, an extensive computational study is carried out, and their performance at this task is compared to that of a baseline of traditional ICVIs.

\subsubsection{Experimental Setup}\label{subsec:exp_data}

\noindent To provide a robust evaluation across a wide variety of cluster complexity conditions, a comprehensive battery of synthetic isotropic Gaussian datasets was generated following the methodology established in \cite{rodriguez2025crb, masud2018nice}. Table \ref{tab:synthetic_params} shows the considered experimental grid, which yields a total of 400 evaluation datasets. 

\begin{table}[h]
\centering
\caption{Parameters used for the generation of the synthetic datasets.}
\label{tab:synthetic_params}
\begin{tabular}{lll}
\hline
\textbf{Parameter} & \textbf{Symbol} & \textbf{ Values} \\
\hline
Number of Groups & $K_{true}$ & $\{2, 4, 8, 16, 32\}$ \\
Dimensionality & $p$ & $\{2, 5, 10, 15\}$ \\
Number of Samples & $N$ & $\{1024, 2048, 4096, 8192\}$ \\
Std. Deviation & $\sigma_{scale}$ & $0.10, 0.15, 0.20, 0.25, 0.30$ \\
\hline
\end{tabular}
\end{table}

Then, the KM and FCM algorithms are applied on the standarized datasets, using the default scikit-learn and scikit-fuzzy implementations, respectively. For each algorithm and dataset, a sequence of partitions $\mathcal{P}_K$ is obtained, with $K=1,\ldots,K_{max}=50$. The optimal number of clusters of the dataset is then estimated using both the MEGCI-based $CV^d$ index introduced above, and a baseline of well-known ICVIs. Specifically, the indexes composing the baseline are:  Partition Coefficient (PC, \cite{Wu2005}), Xie-Beni (XB, \cite{xieBeni1991validity}), Davies-Bouldin (DB, \cite{davies1979cluster}), Silhouette Score (SS, \cite{ROUSSEEUW198753}), and Calinski-Harabasz (CH, \cite{calinski1974dendrite}). 
The PC and XB indexes are obtained from the fuzzy partitions provided by FCM, while DB, SS, CH and the hard (i.e., crisp) version of XB, denoted as XBH, use the KM partitions. 
Regarding the $CV^d$ index, it is computed from the coverage degrees in Eq. \eqref{eq:coverage_degrees}, and these in turn have been computed in two ways, using either the crisp partitions derived from FCM through maximum membership asignment or the KM ones. Moreover, to explore the effect of the orness degree $d$ on NCE performance, a total of 13 values of this parameter are considered. This leads to a spectrum of $CV^d$ indexes ranging from the purely conjunctive $CV^{0}$ to the neutral $CV^{0.5}$ (including orness degrees from 0.05 to 0.45 in steps of 0.1), and up to the purely disjunctive $CV^{1}$ (likewise including degrees from 0.55 to 0.95 in steps of 0.1).

\subsubsection{Experimental results}

\noindent The performance of the considered baseline and MEGCI-based ICVIs across the 400 synthetic datasets is summarized in Table \ref{tab:result_number_k}, and visually depicted in Figure \ref{fig:result_number_k}. The results detail both the absolute number of successful estimations (Hits, i.e., the number of times each method correctly identified the actual $K_{true}$) and the corresponding overall accuracy percentage (Acc.). Although the 13 $CV^d$ indexes were computed for both FCM and KM partitions, only the KM results are reported as they consistently outperformed FCM in accuracy across all indexes.

\begin{table}[ht]
\centering
\caption{Performance at the NCE task of baseline and MEGCI-based ICVIs across the 400 synthetic datasets.}
\label{tab:result_number_k}
\begin{tabular}{l c c l c c}
\hline
\textbf{Methods} & \textbf{Hits} & \textbf{Acc. (\%)} & \textbf{Methods} & \textbf{Hits} & \textbf{Acc. (\%)}\\
\hline
PC & 147 & 36.75 & $CV^{0.05}$ & 217 & 54.25 \\
XB & 154 & 38.50 & $CV^{0.15}$ & 255 & 63.75 \\
XBH &162 & 40.50 & $CV^{0.25}$ & 267 & 66.75\\
DB & 191 & 47.75  & $CV^{0.35}$ & 280 & 70.00  \\
SS & 230 & 57.50  & $CV^{0.45}$ & 293 & 73.25 \\
CH & 249 & 62.25  & $CV^{0.55}$ & 297 & \textbf{74.25} \\
   &   &   & $CV^{0.65}$ & 285 & 71.25 \\
$CV^0$  &  139 & 34.75  & $CV^{0.75}$ & 250 & 62.50 \\
$CV^{0.5}$ & 296  & 74.00  & $CV^{0.85}$ & 189 & 47.25 \\
$CV^1$  &  56 &  14.00 & $CV^{0.95}$ & 114 & 28.50 \\
\hline
\end{tabular}
\end{table}

\begin{figure}[ht]
  \centering
  \includegraphics[width=1\textwidth]{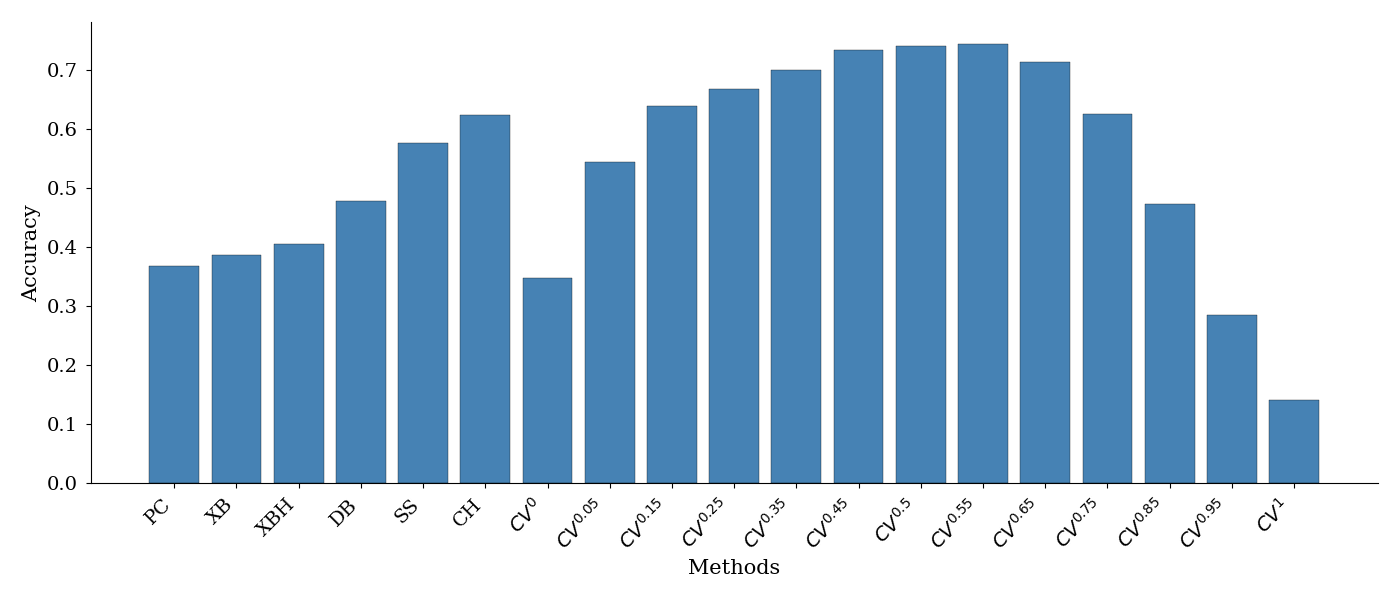}
  \caption{Proportion of correct identifications of the actual number of groups $K$ across the considered 400 datasets, by method.}
  \label{fig:result_number_k}
\end{figure}

Notably, a broad range of the proposed MEGCI-based $CV^d$ indexes, particularly those with $d\in[0.15,0.75]$, attains a better accuracy than that of all baseline ICVIs. Within this band, the estimation accuracy is relatively stable, ranging from 62.50\% up to 74.25\%, and reaching its peak for $CV^{0.55}. $Moreover, there seems to be a significant difference, consistently greater than 10\% in absolute terms, between the 62.25\% accuracy of the best-performing baseline method, CH, and that of the $CV^d$ indexes with medium orness degree, say $d\in[0.45,0.55]$. In contrast, extreme indexes such as the purely conjunctive $CV^{0}$ (34.75\%) or the purely disjunctive $CV^{1}$ (14.00\%) offer a much lower performance. Finally, although in this study $CV^{0.55}$ achieves the best performance, it is important to point out that indexes with a more conjunctive behaviour, i.e., those with $d\in[0,0.45]$, exhibit a significantly better mean performance (60.46\% Acc.) than those with a more disjunctive behavior (mean Acc. across $d\in[0.55,1]$ is 49.63\%).  

Therefore, these commpetitive results of the $CV^d$ indexes assert the flexibility and usefulness of MEGCI-based cohesion measures and ICVIs built upon coverage degrees to evaluate clustering quality aspects, such as the optimal number of clusters.

\section{Conclusions and further research}\label{sec:conclusion}

The need to summarize a collection of membership degrees into a single representative value is one of the hottest topic in fuzzy logic and soft computing. Whenever this information involves not one but two distinct dimensions -- for instance, when a set of $N$ objects is jointly evaluated with respect to $K$ classes, criteria, or decision-makers -- the resulting data is naturally represented by an $N\times K$ membership matrix $\mathbf{U}$ rather than a vector. As discussed throughout this paper, reducing such a matrix to a vector, although mathematically always possible, may discard semantically meaningful relationships between rows and columns that should instead be preserved during the aggregation process.

Motivated by this observation, this paper has introduced and formalized the notion of \emph{matrix aggregation operator} (MAO), a mapping $F:\mathcal{M}_{N,K}([0,1])\to[0,1]$ satisfying the natural boundary and monotonicity conditions of an aggregation operator, but acting directly on matrices instead of vectors, thereby preserving the original bi-dimensional structure of the collection of observations to be aggregated. Within this framework, particular attention has been paid to \emph{decomposable} MAOs, i.e., those that can be built from a two-stage procedure aggregating first rows (or columns) and then the resulting vector. While decomposability provides a simple and widely applicable method for constructing MAOs from well-established vectorial aggregation operators, it has been shown that it constitutes a proper subclass of all MAOs: explicit examples of non-decomposable operators have been provided, including the polarization index of \cite{guevara20} and the image-based examples discussed in Section~\ref{sec:MAOs}, which combine matrix entries in ways that cannot be reduced to independent row- or column-wise processing.

Following this natural matrix structure, a second main contribution concerns the study of symmetry properties specific to the matrix setting. Beyond the standard (global) symmetry inherited from vectorial aggregation operators, the two-dimensional structure of $\mathbf{U}$ allows for a richer taxonomy of invariance properties, arising from permutations of rows, columns, or their internal elements. Seven non-equivalent row- and column-symmetry conditions have been identified and organized according to the logical relationships that hold between them, providing a systematic account of how symmetry and matrix structure interact. In particular, it has been shown that requiring the strongest of these properties simultaneously for both rows and columns collapses a MAO into an ordinary vectorial aggregation operator, effectively erasing the role of the matrix structure -- a result that clarifies the extent to which such structure can be meaningfully exploited.

Building on this theoretical framework, a specific and practically relevant family of MAOs, the \emph{maximum entropy global coverage indices} (MEGCIs), has been introduced. MEGCIs generalize the notion of global coverage index by combining a grouping-based row aggregation with an OWA operator whose weights are fixed, for each orness degree, through the maximum entropy principle. This construction guarantees both the fulfillment of the required boundary conditions and a well-defined, symmetric behaviour across the whole spectrum of orness degrees. The usefulness of MEGCIs has been illustrated in the context of clustering quality assessment, where they have been shown to yield meaningful cluster cohesion measures when applied to coverage-based association degrees, in contrast to their less informative behaviour when applied to standard fuzzy $c$-means memberships. The associated MEGCI-based curvature indexes were further evaluated over an extensive battery of $400$ synthetic datasets for the task of estimating the correct number of clusters, outperforming a representative baseline of classical internal cluster validity indexes across a broad range of orness degrees, and thus providing empirical support for the practical value of the proposed matrix aggregation framework.

Overall, this work shows that endowing aggregation theory with an explicit matrix structure is not a purely formal exercise: it yields new operators, uncovers symmetry phenomena that are invisible at the vector level, and translates into tangible gains in a concrete application domain. Several directions for future research naturally follow from this study as the extension the decomposability and symmetry analysis to other properties (e.g., associativity, idempotency), exploring further non-decomposable MAOs for applications such as image processing or network analysis, and generalizing MAOs to \emph{tensor aggregation operators} for data with more than two dimensions.

\vspace{10pt}

\noindent \textbf{Credits}: \textit{Conceptualization} D.G., J.T.R, I.G, J.M., H.B. \textit{Data curation} A. U-L. \textit{Formal analysis} I.G., J.T.R., D.G. \textit{Funding acquisition} D.G., J.T.R, H.B \textit{Investigation} I.G., A. U-L., J.T.R. \textit{Methodology} D.G., J.T.R., A. U-L. \textit{Project administration} D.G., J.T.R., H.B. \textit{Software} A.U-L., J.T.R. \textit{Supervision} J.T.R., D.G.  \textit{Visualization} A.U-L. \textit{Writing – original draft} J.T.R., I.G., D.G., A.U-L. \textit{Writing – review and editing} J.M., H.B.


\bibliography{References}



\clearpage

\appendix
\section{Appendix. Proof of existence of non-decomposable MAOs}\label{appendix}

\noindent Let $F:\mathcal{M}_{2,2}([0,1]) \longrightarrow [0,1]$ be given by
$$
F(\textbf{U}) = \min(u_{11}\cdot u_{22} + u_{21} \cdot u_{12},1)
$$
\noindent for any $\textbf{U}=\big(\begin{smallmatrix}
  u_{11} & u_{12}\\
  u_{21} & u_{22}
\end{smallmatrix}\big) \in \mathcal{M}_{2,2}([0,1])$.

Clearly, $F$ is a MAO since $F(\textbf{0}_{2,2})=\min(0\cdot 0 + 0 \cdot 0,1)=0$, $F(\textbf{1}_{2,2})=\min(1\cdot 1 + 1 \cdot 1,1)=1$, and $F(\textbf{U}) = \min(u_{11}\cdot u_{22} + u_{21} \cdot u_{12},1) \leq \min(u'_{11}\cdot u'_{22} + u'_{21} \cdot u'_{12},1) = F(\textbf{U}')$ whenever $\textbf{U}=\big(\begin{smallmatrix}
  u_{11} & u_{12}\\
  u_{21} & u_{22}
\end{smallmatrix}\big) \leq \big(\begin{smallmatrix}
  u'_{11} & u'_{12}\\
  u'_{21} & u'_{22}
\end{smallmatrix}\big) = \textbf{U}'$.

Now, suppose that $F$ is a row-decomposable MAO, i.e., there exist AOs $C,R_1,R_2:[0,1]^2 \longrightarrow [0,1]$
such that $F(\textbf{U})=C(R_1(u_{11},u_{12}),R_2(u_{21},u_{22}))$, and denote $a=R_1(1,0),\ b=R_1(0,1),\ c=R_2(1,0)$ and $d=R_2(0,1)$. Notice that it has to be $C(b,c)=C(a,d)=1$ and $C(b,d)=C(a,c)=0$. Then, consider the following cases:
\begin{itemize}
    \item If $c=d$, it has to be $0=C(a,c)=C(a,d) = 1$, which is impossible.
    \item If $c<d$, the monotonicity of $C$ entails that $1=C(b,c) \leq C(b,d) = 0$, which is impossible.
    \item If $d<c$, the monotonicity of $C$ again entails that $1=C(a,d) \leq C(a,c) = 0$, which is impossible.
\end{itemize} 

\noindent Hence, it follows that the numbers $c,d\in\mathbb{R}$ have to be incomparable, but this is an absurd since the real line is a total order. Then, the MAO $F$ can not be row-decomposable. A similar argument shows that $F$ can neither be column-decomposable, and thus it is concluded that $F$ is a non-decomposable MAO.

\end{document}